\pgfplotsset{compat=newest}
\newcommand{\oea}{$(1 + 1)$~EA\xspace}
\newcommand{\oplea}{$(1 + \lambda)$~EA\xspace}
\newcommand{\mpoea}{$(\mu + 1)$~EA\xspace}
\newcommand{\ollea}{$(1 + (\lambda , \lambda))$~GA\xspace}
\newcommand{\onemax}{\textsc{OneMax}\xspace}
\newcommand{\maxsat}{\textsc{MAX-3SAT}\xspace}
\newcommand{\leadingones}{\textsc{LeadingOnes}\xspace}
\newcommand{\jump}{\textsc{Jump}\xspace}
\newcommand{\N}{{\mathbb N}}
\newcommand{\R}{{\mathbb R}}
\DeclareMathOperator{\Bin}{Bin}
\DeclareMathOperator{\lnp}{lnp}
\DeclareMathOperator*{\argmax}{arg\,max}
\newtheorem{theorem}{Theorem}
\newtheorem{lemma}[theorem]{Lemma}
\renewcommand*{\@fnsymbol}[1]{\ensuremath{\ifcase#1\or\dagger\or*\else\@arabic{#1}\fi}}
\title{Fast Mutation in Crossover-based Algorithms\thanks{Extended version of the paper~\cite{AntipovBD20gecco} in the proceedings of GECCO. This version contains all proofs and other details that had to be omitted in the conference version for reasons of space. Also, we have greatly expanded the experimental section.}}
\author{Denis Antipov\thanks{Corresponding author} \\
		ITMO University \\
  		St. Petersburg, Russia \\
  		and \\
		Laboratoire d'Informatique (LIX), \\
		CNRS, \'Ecole Polytechnique, \\ 
		Institut Polytechnique de Paris \\
		Palaiseau, France \\
		antipovden@yandex.ru$^*$ \\
		\and
		Maxim Buzdalov\\
		ITMO University \\
		St. Petersburg, Russia \\
		mbuzdalov@gmail.com \\
		\and
		Benjamin Doerr \\
		Laboratoire d'Informatique (LIX), \\
		CNRS, \'Ecole Polytechnique, \\ 
		Institut Polytechnique de Paris \\
		Palaiseau, France \\
		doerr@lix.polytechnique.fr \\
} 
\begin{document}

\maketitle

{\sloppy

\begin{abstract}
	The heavy-tailed mutation operator proposed in Doerr, Le, Makhmara, and Nguyen (GECCO 2017), called \emph{fast mutation} to agree with the previously used language, so far was proven to be advantageous only in mutation-based algorithms. There, it can relieve the algorithm designer from finding the optimal mutation rate and nevertheless obtain a performance close to the one that the optimal mutation rate gives. 
	
	In this first runtime analysis of a crossover-based algorithm using a heavy-tailed choice of the mutation rate, we show an even stronger impact. For the $(1+(\lambda,\lambda))$ genetic algorithm optimizing the \onemax benchmark function, we show that with a heavy-tailed mutation rate a linear runtime can be achieved. This is asymptotically faster than what can be obtained with any static mutation rate, and is asymptotically equivalent to the runtime of the self-adjusting version of the parameters choice of the $(1+(\lambda,\lambda))$ genetic algorithm. This result is complemented by an empirical study which shows the effectiveness of the fast mutation also on random satisfiable \maxsat instances.
\end{abstract}

\section{Introduction}

It is often cited as a strength of evolutionary algorithms (EAs) that by setting the parameters right the algorithm can be adjusted to the particular problem to be solved. However, it is also known that this process of optimizing the parameters is time-consuming and needs a lot of expert knowledge.

The theoretical research in this field (see, e.g.,~\cite{AugerD11, DoerrN20, Jansen13, NeumannW10}) has contributed to this challenge via mathematical runtime analyses for general parameter values, which allow to understand the influence of the parameter on the performance and allow  to derive optimal parameter values. Examples include (i)~the works of Jansen, de Jong, and Wegener~\cite{JansenJW05} as well as Doerr and K\"unnemann~\cite{DoerrK15}, which determine the runtime of the \oplea on \onemax for general value of $\lambda$ and from this conclude that a linear speed-up with regard to the number of iterations exists only for $\lambda = O\big(\frac{\log(n) \log\log(n)}{\log\log\log(n)}\big)$, (ii)~Witt's analysis~\cite{Witt06} of the runtime of the \mpoea for general values of $\mu$ on the \leadingones benchmark, which in particular shows that for $\mu = O(\frac{n}{\log n})$ a larger parent population does not lead to an asymptotic slow-down of the algorithm, or (iii)~the results of Lehre~\cite{Lehre10,Lehre11} and many follow-up works, which for many non-elitist algorithms determine asymptotically precise thresholds for the selection pressure that separate a highly inefficient regime from one with polynomial runtimes. 

Concerning the mutation rate $p$ of the standard bit mutation operator for bit strings of length $n$, which is our main object of interest in this work, a large number of classic results suggests that a value of $p=\frac 1n$ or close by is a good choice. We note that a mutation rate of $p=\frac 1n$ means that on average a single bit is flipped. The recommendation $p=\frac 1n$ can already be found in~\cite{Back93,Muhlenbein92}. Rigorously proven results show, among others, that only $p = \Theta(\frac 1n)$ can give an $O(n \log n)$ runtime of the \oea on \onemax~\cite{GarnierKS99}, that the asymptotically optimal mutation rate for the \oea on \leadingones is approximately $p=\frac{1.59}{n}$, that $p = (1 \pm o(1)) \frac 1n$ is the asymptotically best mutation rate of the \oea for all pseudo-Boolean linear functions~\cite{Witt13}, that only a mutation rate below $\frac cn$, where $c$ is a specific constant, guarantees a polynomial runtime of the \oea on all monotonic functions~\cite{DoerrJSWZ13,Lengler18}, and that $(1 \pm o(1)) \frac 1n$ is the optimal mutation rate for the \oplea on \onemax when $\lambda$ is small~\cite{GiessenW17}. 

In the light of this previous state of the art, it came as a surprise when Doerr, Le, Makhmara, and Nguyen~\cite{DoerrLMN17} determined the runtime of the \oea on jump functions for general mutation rates and observed that here much higher mutation rates were optimal\footnote{As a reviewer of~\cite{AntipovBD20gecco} pointed out, in~\cite{Prugel04} an upper bound was shown for the runtime of the \oea with general mutation rate on the hurdle problem with hurdle width $2$ and $3$. This upper bound is minimized by the mutation rates $\frac 2n$ and $\frac 3n$. This could have been seen earlier as a hint that larger mutation rates can be useful. Since the central research question discussed in~\cite{Prugel04} was whether crossover is beneficial or not, apparently this detail was overlooked by the broader scientific audience.}. The jump function $\jump_{nk}$ (we deviate here from the notation of~\cite{DoerrLMN17}) is a function defined on bit-string of length $n$ which is mostly identical to the easy \onemax function, but which has a valley of low fitness of Hamming width $k-1$ around the global optimum. Consequently, elitist algorithms can leave this local optimum only by flipping $k$ specific bits (and~\cite{Doerr20gecco} suggests that non-elitist algorithms cannot do better). As shown in~\cite{DoerrLMN17}, for this multimodal benchmark function the insights gained previously on unimodal functions like \onemax, linear functions, or \leadingones do not apply. The optimal mutation rate for $\jump_{nk}$ was found to be $(1\pm o(1)) \frac kn$. Deviating from this optimal rate by a small constant factor leads to a runtime increase by a factor of $e^{\Omega(k)}$. Consequently, the choice of the mutation rate for this problem is truly delicate. 

To overcome this difficulty, the use of a random mutation rate chosen according to a heavy-tailed distribution, more specifically, a power-law distribution with exponent $\beta>1$, was suggested. This mutation operator, called \emph{fast mutation} in agreement with previous uses of heavy-tailed distributions in continuous evolutionary computation~\cite{SzuH87,YaoL97,YaoLL99}, samples a random number $\alpha \in [1..\lfloor \frac n2 \rfloor]$ with probability proportional to $\alpha^{-\beta}$ and then flips each bit independently with rate~$\frac \alpha n$. Each application of this operator samples the value of $\alpha$ independently. 

The main result in~\cite{DoerrLMN17} is that the \oea with this mutation operator optimizes $\jump_{nk}$ in a time that is only by a factor of $O(k^{\beta-0.5})$ larger than the time resulting from standard bit mutation with the optimal rate. Given that missing the optimal rate (which is only accessible when knowing $k$) by a small constant factor already incurs a runtime increase by a factor of $e^{\Omega(k)}$, the $O(k^{\beta-0.5})$ price for having a one-size-fits-all mutation operator appears to be a good investment. From the asymptotic point of view $\beta$ should be taken arbitrarily close to $1$, but the experiments conducted in~\cite{DoerrLMN17} suggested that $\beta = 1.5$ is a good choice. Both theory and experiments showed that the choice of $\beta$ is not overly critical. For this reason, it is fair to call fast mutation a parameterless operator.

Since the fast mutation operator is nothing else than a random linear combination of standard bit mutation operators with rates~$\frac \alpha n, \alpha = 1, \dots, \lfloor \frac n2 \rfloor$, it is not surprising that the resulting runtime is higher than the one from the best of these individual operators. Rather, it is surprising that by simply averaging over the available options, one comes relatively close to the optimum, and this in a scenario where for static rates a small deviation from the optimum leads to a significantly increased runtime.

In this work, we observe an even more surprising strength of the fast mutation operator. We investigate how the $(1+(\lambda,\lambda))$ genetic algorithm (\ollea), first proposed in Doerr, Doerr, and Ebel~\cite{DoerrDE15}, performs with the fast mutation operator. The \ollea is an evolutionary algorithm that creates $\lambda$ offspring from a unique parent individual with an unusually high mutation rate (independently, apart from the fact that they all have the same Hamming distance from the parent), selects the best of these, and creates another $\lambda$ individuals via a biased crossover between this mutation winner and the original parent. The best of these is taken as the new parent individual if it is at least as good as the previous parent (see Section~\ref{sec:preliminaries} for more details). 

This combination of a high mutation rate and crossover with the parent as repair mechanism allows the algorithm to more efficiently explore the search space when the parameters are chosen suitably. Both from informal considerations and from existing runtime results, the right parameterization seems to be that the mutation rate is $p = \frac \lambda n$ and the crossover bias, that is, the rate with which the crossover offspring takes bits from the mutation winner, is $c = \frac 1 \lambda$. The informal argument for this is that a single application of mutation and crossover generates a bit string distributed as if generated via standard bit mutation with rate $\frac 1n$. 

With a number of runtime analyses~\cite{DoerrDE15,BuzdalovD17,DoerrD18,AntipovDK19foga} supporting this choice\footnote{We note that the work~\cite{AntipovDK20} conducted in parallel to ours suggests that a different choice is necessary when large fitness valleys need to be crossed.}, we fix this relation of the three parameters in the remainder of this work. Since the mutation rate is the starting point of our research, we can alternatively first choose a mutation rate of type $p = \frac \alpha n$ and then set $\lambda = pn$ and $c = \frac{1}{pn}$. 

The right choice of the mutation rate is non-trivial. The good news from~\cite{DoerrDE15} is that any rate between $p = \omega(\frac 1n)$ and $p = o(\frac{\log n}{n})$ leads to a runtime of $o(n \log n)$ on \onemax, that is, asymptotically faster than the performance of classic evolutionary algorithms. The optimal mutation rate of \[p = \Theta\left(\frac 1n \sqrt{\frac{\log(n) \log\log(n)}{\log\log\log(n)}}\,\right),\]
however, is non-trivial to find~\cite{DoerrD18}. It yields an expected runtime on \onemax of
\[E[T] = \Theta\left(n \sqrt{\frac{\log(n) \log\log\log(n)}{\log\log(n)}}\,\right).\]

Our main research goal in this work is understanding how the \ollea performs when instead of standard bit mutation with a fixed mutation rate~$p$ the fast mutation operator is used. With the previously suggested relations between mutation rate, offspring number, and crossover bias, this means that first a number $\alpha$ is sampled from a power-law distribution, then $\lambda = \alpha$ offspring are generated via flipping $\ell$ bits chosen uniformly at random, where $\ell \sim \Bin(n, \frac{\alpha}{n})$,\footnote{This mutation can be interpreted as a standard bit mutation with rate $\frac{\alpha}{n}$, but conditional on having the same number of flipped bits for all individuals.} and finally $\lambda$ times a biased crossover with bias $c = \frac 1 \alpha$ between parent and mutation winner is performed. We call this modified algorithm the \emph{fast \ollea}.

\textbf{Our main result} is that not only the use of the fast mutation operator in the \ollea relieves us from finding a good mutation rate, but surprisingly we can even obtain a runtime that is faster than the runtime of the \ollea with any fixed mutation rate: If the power-law exponent $\beta$ satisfies $2 < \beta < 3$, then the fast \ollea has an expected runtime of $O(n)$ on \onemax. 

We note that a linear runtime of the \ollea on \onemax was obtained earlier with a self-adjusting choice of the mutation rate based on the one-fifth rule~\cite{DoerrD18}. While this worked well on \onemax,  experimental~\cite{GoldmanP14} and theoretical~\cite{BuzdalovD17} studies on satisfiable \maxsat instances showed that this approach carries the risk that the population size $\lambda$ increases rapidly because the problem structure may just not allow a one-fifth success rate, regardless how large $\lambda$ is. Since this behavior increases the time complexity of each iteration, it leads to a significant performance loss. Such problems, naturally, cannot arise with the static behavior of the fast mutation operator. 

Via an empirical study, we show that the fast mutation operator indeed without any modification also solves well the satisfiable \maxsat instances for which the one-fifth rule variant of the \ollea did not perform well  in~\cite{BuzdalovD17} (unless enriched with a suitable cap on $\lambda$). However, our study also shows that on \onemax itself, the self-adjusting \ollea is by a constant factor faster than the fast \ollea. Since the runtime loss from a degenerate behavior of the one-fifth rule version of the \ollea can be large (due to the population size of order $n$), we draw from these results the recommendation to use the more robust fast \ollea on a novel problem rather than the self-adjusting \ollea.

\section{Notation and Problem Statement}
\label{sec:preliminaries}

The \ollea, first presented in~\cite{DoerrDE15}, has the following working principles. It stores one current individual $x$, which is initialized with a random bit string. Each iteration of the \ollea consists of two phases, which are the \emph{mutation} phase and the \emph{crossover} phase. In the mutation phase the algorithm first chooses the \emph{mutation strength} $\ell$ following the binomial distribution with parameters $n$ and $p$, where $p$ is usually called the \emph{mutation rate}. It then creates $\lambda$ mutants by copying the current individual $x$ and flipping exactly $\ell$ bits which are chosen uniformly at random, independently for each mutant. After that the mutant with the best fitness is chosen as the winner of the mutation phase $x'$ (all ties are broken uniformly at random). In the crossover phase the algorithm $\lambda$ times performs a crossover between $x$ and $x'$ by taking each bit from $x'$ with probability $c$ and from $x$ otherwise. The probability $c$ is called the \emph{crossover bias.} The best crossover offspring $y$ (all ties are again broken uniformly at random) is compared with the current individual $x$. If $y$ is not worse, then it replaces $x$. The main hope behind this algorithm is that with a high mutation rate, the mutation winner $x'$ contains some beneficial solution elements, and that the crossover with the parent acts as repair mechanism that removes the destructions caused by the high mutation rate.

As it was discussed in the introduction, the standard parameter setting proposed in~\cite{DoerrDE15} uses the mutation rate $p = \frac{\lambda}{n}$ and the crossover bias $c = \frac{1}{\lambda}$. However, there is not strong recommendation on how to choose $\lambda$. For the static choice~\cite{DoerrDE15} suggests to use $\lambda = \omega(1)$ and $\lambda = o(\log(n))$ in order to have a $o(n\log n)$ runtime on \onemax, but this runtime is still super-linear. It was also shown in~\cite{DoerrDE15} that choosing a fitness-dependent $\lambda = \sqrt{\frac{n}{n - f(x)}}$ gives a linear runtime on \onemax. In~\cite{DoerrD18} it was shown that if we control $\lambda$ according to the one-fifth rule we also get a $\Theta(n)$ runtime on \onemax.




In this paper we propose to choose $\lambda$ in each iteration from some heavy-tailed distribution. More precisely, the probability that we choose $\lambda = i$ is
\begin{align*}
	\Pr[\lambda = i] = \begin{cases}
		C_{\beta, u} i^{-\beta}, &\text{ if } i \in [1..u], \\
		0, &\text{ otherwise,}\\
	\end{cases}
\end{align*}
where $\beta \in \R$ is the power-law exponent of the distribution (which is always considered as a constant), $u \in \N$ is an upper bound on the choice of $\lambda$ (and may depend on $n$), and $C_{\beta, u} \coloneqq (\sum_{i = 1}^u i^{-\beta})^{-1}$ is the normalization coefficient. All our runtime results on \onemax will hold for the classic choice $u = \lfloor n/2 \rfloor$. We introduce this additional parameter because the Max-SAT analyses in~\cite{BuzdalovD17} showed that sometimes a stricter upper bound on $\lambda$ is necessary. For that reason, it is interesting to see also in the \onemax analyses how small an upper bound on $\lambda$ can be taken so that a linear runtime is still obtained. 

The detailed pseudocode of the fast \ollea is shown in Algorithm~\ref{alg:ollea}. Our main result will be that this simple way of choosing $\lambda$ gives us a linear runtime for all $\beta \in (2, 3)$ and $u \ge \ln^{\frac{1}{3 - \beta}}(n)$.

\begin{algorithm}[h]
	$x \gets $ random bit string of length $n$\; 
    \While{not terminated}
        {
		Choose $\lambda$ from $[1..u]$ with $\Pr[\lambda = i] \sim i^{-\beta}$\;
        Choose $\ell \sim \Bin\left(n, \frac{\lambda}{n}\right)$\;
        \For{$i \in [1..\lambda]$}
            {$x^{(i)} \gets$ a copy of $x$\;
            Flip $\ell$ bits in $x^{(i)}$ chosen uniformly at random\;
            }
        $x' \gets \argmax_{z \in \{x^{(1)}, \dots, x^{(\lambda)}\}}f(z)$\;
        \For{$i \in [1..\lambda]$}
            {Create $y^{(i)}$ by taking each bit from $x'$ with probability $\frac{1}{\lambda}$ and from $x$ with probability $\frac{\lambda - 1}{\lambda}$\;
            }
        $y \gets \argmax_{z \in \{y^{(1)}, \dots, y^{(\lambda)}\} }f(z)$\;
        \If{$f(y) \ge f(x)$}
            {
             $x \gets y$\;   
            }
        }
	\caption{The fast \ollea with power-law exponent $\beta$ and upper limit $u$ maximizing $f:\{0,1\}^n\to\R$}
	\label{alg:ollea}
\end{algorithm}

\subsection{Useful Tools}
\label{sec:useful-tools}

In this section we collect some classic results which are used in our proofs. First, to be able to make the transition between the number of iterations and the number of fitness evaluations, we use Wald's equation~\cite{Wald45}.

\begin{lemma}[Wald's equation]\label{lem:wald}
	Let $(X_t)_{t \in \N}$ be a sequence of real-valued random variables and let $T$ be a positive integer random variable. Let also all following conditions be true.
	\begin{enumerate}
		\item All $X_t$ have the same finite expectation.
		\item For all $t \in \N$ we have $E[X_t \mathds{1}_{\{T \ge t\}}] = E[X_t] \Pr[T \ge t]$.
		\item $\sum_{t = 1}^{+\infty} E[|X_t| \mathds{1}_{\{T \ge t\}}] < \infty$.
		\item $E[T]$ is finite.
	\end{enumerate}
	Then we have
	\[
		E\left[\sum_{t = 1}^{T} X_t\right] = E[T]E[X_1].	
	\]
\end{lemma}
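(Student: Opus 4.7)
The plan is to rewrite the random sum $\sum_{t=1}^T X_t$ as a deterministic infinite sum of indicator-truncated terms, swap the expectation with this sum using absolute summability, and then apply the remaining hypotheses in turn. More precisely, since $T$ takes positive integer values, one has the pointwise identity
\[
  \sum_{t=1}^{T} X_t \;=\; \sum_{t=1}^{\infty} X_t \, \mathds{1}_{\{T \ge t\}}.
\]
Taking expectation on both sides and using Fubini--Tonelli (or the dominated convergence theorem) to interchange $E[\cdot]$ with the infinite sum, which is justified by hypothesis~3, gives
\[
  E\!\left[\sum_{t=1}^{T} X_t\right] \;=\; \sum_{t=1}^{\infty} E\!\left[X_t \, \mathds{1}_{\{T \ge t\}}\right].
\]

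Next I would invoke hypothesis~2 to rewrite each summand as $E[X_t]\Pr[T \ge t]$, and then hypothesis~1 to replace $E[X_t]$ by the common value $E[X_1]$, which can be factored out of the sum. This reduces the right-hand side to $E[X_1] \sum_{t=1}^{\infty} \Pr[T \ge t]$. Finally, the classical tail-sum formula for a positive integer-valued random variable,
\[
  E[T] \;=\; \sum_{t=1}^{\infty} \Pr[T \ge t],
\]
holds unconditionally and is finite by hypothesis~4, so combining these identities yields $E[\sum_{t=1}^{T} X_t] = E[X_1]\, E[T]$, which is the claim.

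The only real obstacle is justifying the interchange of the expectation with the infinite sum; this is precisely why condition~3 is included as a hypothesis. All other manipulations are substitutions dictated by the remaining hypotheses, so once the interchange is in place the proof collapses into a short chain of equalities. I would present the argument exactly in the order above, with the verification of Fubini--Tonelli's applicability serving as the only step requiring explicit commentary.
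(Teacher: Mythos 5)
Your proof is correct: the decomposition $\sum_{t=1}^{T} X_t = \sum_{t=1}^{\infty} X_t \mathds{1}_{\{T \ge t\}}$, the interchange of expectation and sum justified by condition~3 via Fubini--Tonelli, and the tail-sum formula $E[T] = \sum_{t=1}^{\infty} \Pr[T \ge t]$ constitute the standard argument for Wald's equation, and each hypothesis is used exactly where it is needed. The paper itself states this lemma as a known tool and cites Wald's 1945 paper rather than proving it, so there is no in-paper proof to compare against; your argument is the canonical one and would serve as a complete proof.
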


We use the following inequality to estimate the probability that at least one of $\lambda$ Bernoulli trials succeeds. 

\begin{lemma}
	\label{lem:Bernoulli}
	For all $p \in [0, 1]$ and all $\lambda > 0$ we have
	\[
		1 - (1 - p)^\lambda \ge \frac{\lambda p}{1 + \lambda p}.	
	\]
\end{lemma}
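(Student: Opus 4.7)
The plan is to prove the inequality by a two-step reduction through the exponential bound. Writing $f(\lambda) = 1 - (1-p)^\lambda$ and $g(\lambda) = \frac{\lambda p}{1 + \lambda p}$, both quantities equal $0$ at $\lambda = 0$ and tend to $1$ as $\lambda \to \infty$, so comparing them via a single convenient intermediate function is natural. The obvious intermediate is $1 - e^{-\lambda p}$, which sits between $f$ and $g$.

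First, I would establish $1 - (1-p)^\lambda \ge 1 - e^{-\lambda p}$. This follows immediately from the classical bound $1 - p \le e^{-p}$, valid for all $p \in \mathbb{R}$ (for instance, by the tangent-line inequality for the convex function $e^{-p}$ at $p = 0$). Raising both sides to the power $\lambda > 0$ and subtracting from $1$ gives the desired estimate. The boundary cases $p = 0$ and $p = 1$ can be checked by direct substitution and are trivial.

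Second, setting $x := \lambda p \ge 0$, it remains to show $1 - e^{-x} \ge \frac{x}{1+x}$. Multiplying through by the positive quantity $1 + x$ and rearranging, this is equivalent to $(1+x)\,e^{-x} \le 1$, i.e.\ to $e^{x} \ge 1 + x$, which is once again the tangent-line inequality, this time for $e^x$ at $0$, and thus holds for every real $x$. Chaining the two bounds yields the claim.

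Since both ingredients are standard tangent-line estimates for the exponential, no serious obstacle is anticipated; the only care required is the trivial verification of the boundary cases $p \in \{0, 1\}$, where the right-hand side is either $0$ or $\tfrac{\lambda}{1+\lambda} \le 1$.
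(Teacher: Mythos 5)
Your proof is correct, but it takes a different route from the paper. The paper simply cites the known bound $(1-p)^\lambda \le \frac{1}{1+\lambda p}$ (Lemma~8 of Rowe and Sudholt, or inequality~(1.4.19) in Doerr's book chapter) and subtracts from $1$; your argument instead passes through the intermediate quantity $1 - e^{-\lambda p}$ and reduces everything to the two tangent-line inequalities $1-p \le e^{-p}$ and $e^x \ge 1+x$. In effect you have reproved the cited bound from scratch, since chaining your two steps gives $(1-p)^\lambda \le e^{-\lambda p} \le \frac{1}{1+\lambda p}$. What your version buys is self-containedness: a reader needs no external reference, only the convexity of the exponential. What the paper's version buys is brevity and a pointer to the standard toolbox. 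One small remark: your worry about the boundary cases $p \in \{0,1\}$ is unnecessary, since $1-p \le e^{-p}$ holds there and both sides of $(1-p)^\lambda \le e^{-\lambda p}$ are nonnegative, so raising to the power $\lambda>0$ is legitimate throughout $[0,1]$; no separate verification is needed.
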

\begin{proof}
	By~\cite[Lemma 8]{RoweS14} (or~(1.4.19) in~\cite{Doerr20bookchapter}) we have $(1 - p)^\lambda \le \frac{1}{1 + \lambda p}$. Hence,
	\begin{align*}
        1 - \left(1 - p\right)^\lambda &\ge 1 - \frac{1}{1 + \lambda p} = \frac{\lambda p}{1 + \lambda p}. \qedhere
	\end{align*}
\end{proof}

We frequently use the following bounds on the partial sums of the generalized harmonic series. 

\begin{lemma}\label{lem:sum-lower}
	For all $s \in \R$ such that $s \ge 1$ and for all $\alpha \ne 1$ we have $\sum_{i = 1}^{\lceil s \rceil} i^{-\alpha} \ge \frac{s^{1 - \alpha} - 1}{1 - \alpha}$. For $\alpha = 1$ we have $\sum_{i = 1}^{\lceil s \rceil} i^{-\alpha} \ge \ln(s)$.
\end{lemma}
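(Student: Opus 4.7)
The natural approach is integral comparison, treating $\sum_{i=1}^{\lceil s\rceil} i^{-\alpha}$ as a Riemann-style bound for $\int_1^s x^{-\alpha}\,dx = \frac{s^{1-\alpha}-1}{1-\alpha}$. The sign of $\alpha$ determines whether $f(x)=x^{-\alpha}$ is monotonically decreasing or increasing on $[0,\infty)$, so the direction of the rectangle-versus-integral inequality changes. I will therefore split the argument into the three natural cases $\alpha>0$ (with $\alpha\ne 1$), $\alpha<0$ (including $\alpha=0$ absorbed here as the constant case), and $\alpha=1$.

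For $\alpha\in(0,\infty)\setminus\{1\}$, the function $x\mapsto x^{-\alpha}$ is positive and decreasing, so that for every integer $i\ge 1$ we have $i^{-\alpha}\ge\int_i^{i+1} x^{-\alpha}\,dx$. Summing from $i=1$ to $\lceil s\rceil$ telescopes into
\[
\sum_{i=1}^{\lceil s\rceil} i^{-\alpha} \;\ge\; \int_1^{\lceil s\rceil+1} x^{-\alpha}\,dx \;\ge\; \int_1^{s} x^{-\alpha}\,dx \;=\; \frac{s^{1-\alpha}-1}{1-\alpha},
\]
where the second inequality uses $\lceil s\rceil+1\ge s$ together with positivity of the integrand.

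For $\alpha\le 0$, the function $x\mapsto x^{-\alpha}$ is nondecreasing on $[0,\infty)$, hence $i^{-\alpha}\ge\int_{i-1}^{i} x^{-\alpha}\,dx$ for each $i\ge 1$. Telescoping gives
\[
\sum_{i=1}^{\lceil s\rceil} i^{-\alpha} \;\ge\; \int_0^{\lceil s\rceil} x^{-\alpha}\,dx \;\ge\; \int_0^{s} x^{-\alpha}\,dx \;=\; \frac{s^{1-\alpha}}{1-\alpha} \;\ge\; \frac{s^{1-\alpha}-1}{1-\alpha},
\]
the last step using $1-\alpha>0$. The case $\alpha=1$ is handled identically to the decreasing case: $\sum_{i=1}^{\lceil s\rceil} i^{-1}\ge\int_1^{\lceil s\rceil+1} x^{-1}\,dx\ge\int_1^{s} x^{-1}\,dx=\ln(s)$.

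There is no real obstacle here; the only mildly delicate point is keeping track of which endpoint ($i$ versus $i-1$, and $\lceil s\rceil+1$ versus $\lceil s\rceil$) to pick so that the resulting integral dominates $\int_1^{s} x^{-\alpha}\,dx$ in the correct direction, which depends on the monotonicity and sign of $x^{-\alpha}$. Once the case split is made along the sign of $\alpha$, each case is a one-line telescoping estimate followed by evaluating the elementary antiderivative.
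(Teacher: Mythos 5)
Your proof is correct and follows essentially the same route as the paper: comparing the sum to $\int_1^s x^{-\alpha}\,dx$ via the integral test, with the case distinction governed by the monotonicity of $x^{-\alpha}$. You merely spell out explicitly (via the telescoping rectangle bounds) what the paper delegates to its two illustrative figures, so your write-up is, if anything, slightly more complete than the original.
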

\begin{proof}
	We estimate the sum for $\alpha \ne 1$ through the corresponding integral (this estimate is illustrated in Figures~\ref{fig:integral-negative} and~\ref{fig:integral-positive}).
	\begin{align*}
		\sum_{i = 1}^{\lceil s \rceil} i^{-\alpha} \ge \int_{1}^s x^{-\alpha} dx = \frac{s^{1 - \alpha} - 1}{1 - \alpha}.
	\end{align*}

	The case for $\alpha = 1$ is a well-known bound on the partial sum of the harmonic series. 
\end{proof}

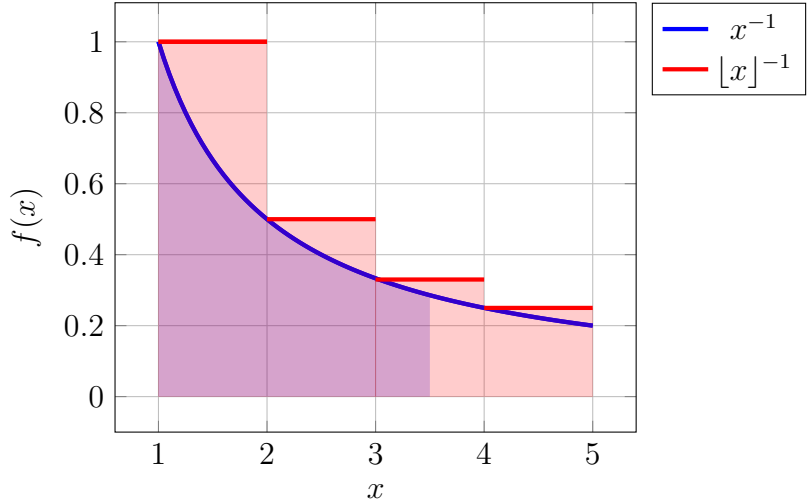
\begin{figure}
    \caption{Illustration of the inequality $\sum_{i = 1}^{\lceil s \rceil} i^{-\alpha}\ge \int_1^s x^{-\alpha} dx$ for the case $\alpha \ge 0$. In this example we have $\alpha = 1$, $s = 3.5$ and $\lceil s \rceil = 4$. The red area equals to the sum. The blue area (fully under red, thus purple) equals to the integral.}
	\label{fig:integral-negative}
    \begin{center}
        \begin{tikzpicture}
          \begin{axis}[
          grid=major,
          ylabel={$f(x)$},
          xlabel={$x$},
          legend pos=outer north east,
          ymin=-0.1]
    
          \addplot[domain=1:5, samples=81, draw=blue, ultra thick]{f1(x)};
          \addlegendentry{$x^{-1}$}
          \addplot[draw=red, ultra thick] coordinates
          {(1,1) (2, 1)};
          \addlegendentry{$\lfloor x \rfloor^{-1}$}
    
            \addplot[domain=1:3.5, samples=81, draw=none, fill=blue, fill opacity=0.2]{f1(x)} \closedcycle;
            \addplot[domain=1:5, samples=81, draw=blue, ultra thick]{f1(x)};
    
            \addplot[draw=none, fill=red, fill opacity=0.2] coordinates
            {(1,1) (2, 1)} \closedcycle;
            \addplot[draw=none, fill=red, fill opacity=0.2] coordinates
            {(2,0.5) (3, 0.5)} \closedcycle;
            \addplot[draw=none, fill=red, fill opacity=0.2] coordinates
            {(3,0.33) (4, 0.33)} \closedcycle;
            \addplot[draw=none, fill=red, fill opacity=0.2] coordinates
            {(4,0.25) (5, 0.25)} \closedcycle;
        
            \addplot[draw=red, ultra thick] coordinates
            {(1,1) (2, 1)};
            \addplot[draw=red, ultra thick] coordinates
            {(2,0.5) (3, 0.5)};
            \addplot[draw=red, ultra thick] coordinates
            {(3,0.33) (4, 0.33)};
            \addplot[draw=red, ultra thick] coordinates
            {(4,0.25) (5, 0.25)};
          \end{axis}
      \end{tikzpicture}
    \end{center}
\end{figure}

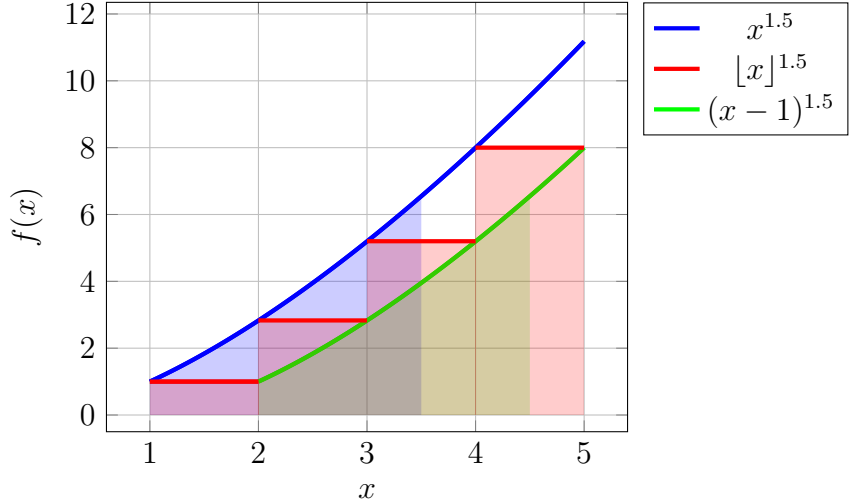
\begin{figure}    
    \caption{Illustration of the inequality $\sum_{i = 1}^{\lceil s \rceil} i^{-\alpha}\ge \int_1^s x^{-\alpha} dx$ for the case $\alpha < 0$. In this example we have $\alpha = -1.5$, $s = 3.5$ and $\lceil s \rceil = 4$. The red area equals to the sum. The blue area equals to the integral and to the green area, which is fully under the red one.}
	\label{fig:integral-positive}
    \begin{center}
      \begin{tikzpicture}
        \begin{axis}[
        grid=major,
        ylabel={$f(x)$},
        xlabel={$x$},
        legend pos=outer north east,
        ymin=-0.5]
    
        \addplot[domain=1:5, samples=81, draw=blue, ultra thick]{f2(x)};
        \addlegendentry{$x^{1.5}$}
        \addplot[draw=red, ultra thick] coordinates
        {(1,1) (2, 1)};
        \addlegendentry{$\lfloor x \rfloor^{1.5}$}
    
        \addplot[domain=2:5, samples=81, draw=green, ultra thick]{f3(x)};
        \addlegendentry{$(x - 1)^{1.5}$}

          \addplot[domain=1:3.5, samples=81, draw=none, fill=blue, fill opacity=0.2]{f2(x)} \closedcycle;
          \addplot[domain=1:5, samples=81, draw=blue, ultra thick]{f2(x)};
    
          \addplot[domain=2:4.5, samples=81, draw=none, fill=green, fill opacity=0.2]{f3(x)} \closedcycle;
          \addplot[domain=2:5, samples=81, draw=green, ultra thick]{f3(x)};
    
          \addplot[draw=none, fill=red, fill opacity=0.2] coordinates
          {(1,1) (2, 1)} \closedcycle;
          \addplot[draw=none, fill=red, fill opacity=0.2] coordinates
          {(2,2.83) (3, 2.83)} \closedcycle;
          \addplot[draw=none, fill=red, fill opacity=0.2] coordinates
          {(3,5.2) (4, 5.2)} \closedcycle;
          \addplot[draw=none, fill=red, fill opacity=0.2] coordinates
          {(4,8) (5, 8)} \closedcycle;
      
          \addplot[draw=red, ultra thick] coordinates
          {(1,1) (2, 1)};
          \addplot[draw=red, ultra thick] coordinates
          {(2,2.83) (3, 2.83)};
          \addplot[draw=red, ultra thick] coordinates
          {(3,5.2) (4, 5.2)};
          \addplot[draw=red, ultra thick] coordinates
          {(4,8) (5, 8)};
        \end{axis}
    \end{tikzpicture}
    \end{center}
\end{figure}

\begin{lemma}\label{lem:sum-upper}
	For all $u \in \N$ we have 
	\begin{itemize}
		\item $\sum_{i = 1}^{u} i^{-\alpha} \le u^{1 - \alpha}\frac{2 - \alpha}{1 - \alpha}$, if $\alpha < 0$,
		\item $\sum_{i = 1}^{u} i^{-\alpha} \le \frac{u^{1 - \alpha}}{1 - \alpha}$, if $\alpha \in [0, 1)$,
		\item $\sum_{i = 1}^{u} i^{-\alpha} \le \frac{\alpha}{\alpha - 1}$, if $\alpha > 1$,
		\item $\sum_{i = 1}^{u} i^{-\alpha} \le \ln(u) + 1$, if $\alpha = 1$.
	\end{itemize}
\end{lemma}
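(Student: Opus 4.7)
The plan is to prove each of the four bounds by comparing the sum $\sum_{i=1}^{u} i^{-\alpha}$ to an appropriate integral of $x^{-\alpha}$, in direct analogy with the proof of Lemma~\ref{lem:sum-lower} but aimed at upper bounds. The decisive point is the monotonicity of $f(x) = x^{-\alpha}$: for $\alpha \ge 0$ this function is non-increasing on $[1, u]$, while for $\alpha < 0$ it is increasing, and these two regimes require slightly different comparisons (exactly the pictures shown in Figures~\ref{fig:integral-negative} and~\ref{fig:integral-positive}).

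For $\alpha \ge 0$ I would separate off the first term and use the decreasing bound $i^{-\alpha} \le \int_{i-1}^{i} x^{-\alpha}\,dx$ for $i \ge 2$, which after summing yields the master inequality
\[
  \sum_{i=1}^{u} i^{-\alpha} \le 1 + \int_{1}^{u} x^{-\alpha}\,dx.
\]
The three remaining cases are then mere evaluations of the integral. For $\alpha \in [0,1)$ the right-hand side equals $(u^{1-\alpha} - \alpha)/(1-\alpha)$, which is at most $u^{1-\alpha}/(1-\alpha)$ because $\alpha \ge 0$. For $\alpha = 1$ it is exactly $1 + \ln(u)$. For $\alpha > 1$ one rewrites the integral as $(1 - u^{1-\alpha})/(\alpha-1) \le 1/(\alpha-1)$, giving a total of $1 + 1/(\alpha-1) = \alpha/(\alpha-1)$.

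For $\alpha < 0$ the integrand is increasing, so I would pull out the largest term $u^{-\alpha}$ and apply $i^{-\alpha} \le \int_{i}^{i+1} x^{-\alpha}\,dx$ for $i \le u - 1$, obtaining
\[
  \sum_{i=1}^{u} i^{-\alpha} \le u^{-\alpha} + \int_{1}^{u} x^{-\alpha}\,dx = u^{-\alpha} + \frac{u^{1-\alpha} - 1}{1 - \alpha}.
\]
To reach the claimed bound $u^{1-\alpha}(2-\alpha)/(1-\alpha)$ I would multiply through by $1 - \alpha > 0$ and simplify; the inequality reduces to $(1-\alpha)\, u^{-\alpha}\, (1-u) \le 1$, which holds trivially for $u \ge 1$ and $\alpha < 0$ since the left-hand side is non-positive.

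The main subtlety I expect is fixing the constant $(2-\alpha)/(1-\alpha)$ in the $\alpha < 0$ case. The naive bound $\sum_{i=1}^{u} i^{-\alpha} \le \int_{1}^{u+1} x^{-\alpha}\,dx$ introduces a factor $(u+1)^{1-\alpha}$ that, for small $u$ or large $|\alpha|$, is not controlled by $u^{1-\alpha}(2-\alpha)/(1-\alpha)$, which is why I would extract the top term separately before integrating. Everything else is routine integration of power functions plus case analysis on the position of $\alpha$ relative to $0$ and $1$.
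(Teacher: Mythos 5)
Your proposal is correct and follows essentially the same route as the paper: bounding the sum by a comparison integral plus one separated term (the first term for $\alpha\ge 0$, the top term $u^{-\alpha}$ for $\alpha<0$), then evaluating the integral case by case. The only cosmetic difference is how the final algebra for $\alpha<0$ is organized; the paper simply uses $u^{-\alpha}\le u^{1-\alpha}$ while you multiply through by $1-\alpha$, and both work.
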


\begin{proof}[Proof of Lemma~\ref{lem:sum-upper}]
	By analogy with Lemma~\ref{lem:sum-lower} we estimate the sum through a corresponding integral. 
	If $\alpha < 0$ we have
	\begin{align*}
		\sum_{i = 1}^{u} i^{-\alpha} &\le \int_{1}^{u} x^{-\alpha} dx  + u^{-\alpha} \le \frac{u^{1 - \alpha} - 1}{1 - \alpha} + u^{-\alpha} \le u^{1 - \alpha}\frac{2 - \alpha}{1 - \alpha}. \\ 
	\end{align*}
	If $\alpha \ge 0$ we have
	\begin{align*}
		\sum_{i = 1}^{u} i^{-\alpha} &\le 1 + \int_{2}^{u + 1} (x - 1)^{-\alpha} dx \le 1 + \frac{u^{1 - \alpha} - 1}{1 - \alpha} \\ 
	\end{align*}
	If $\alpha \in [0, 1)$, then we have
	\begin{align*}
		\sum_{i = 1}^{u} i^{-\alpha} &\le \frac{u^{1 - \alpha} - 1 + 1 - \alpha}{1 - \alpha} \le \frac{u^{1 - \alpha}}{1 - \alpha}. \\
	\end{align*}
	If $\alpha > 1$, we have
	\begin{align*}
		\sum_{i = 1}^{u} i^{-\alpha} &\le 1 + \frac{1}{\alpha - 1} \le \frac{\alpha}{\alpha - 1}. \\
	\end{align*}

	The case for $\alpha = 1$ is a well-known bound on the partial sum of the harmonic series. 
\end{proof}

\section{Runtime Analysis}
\label{sec:analysis}

In this section we prove upper and lower bounds on the runtime of the fast \ollea on \onemax.

\subsection{Upper Bound}
\label{sec:upper-bound}

Our aim in this subsection is to prove an upper bound on the number of fitness evaluations taken until the fast \ollea finds the optimum of the \onemax benchmark. Since it is technically easier, we first regard the number of iterations until the optimum is found. For algorithms with fixed population sizes, such a bound would immediately imply a bound on the number of fitness evaluations (namely by multiplying the number of iterations with the fixed number of fitness evaluations per iteration). For the fast \ollea using a newly sampled value of $\lambda$ in each iteration, things are not that easy, but Wald's equation (Lemma~\ref{lem:wald}) allows to argue that multiplying with the expected number of fitness evaluations per iteration gives the right result.

Before proceeding with proofs, we now state two theorems that together constitute the main result of this subsection.
We start by showing that for reasonable parameter values, the optimum is found in a linear number of iterations.

\begin{theorem}\label{thm:iterations}
	If $\beta \in (1, 3)$ and $u \ge \ln^\frac{1}{3 - \beta}(n)$, then the expected number of iterations until the fast \ollea finds the optimum of \onemax function is $O(n)$.
\end{theorem}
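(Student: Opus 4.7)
The plan is drift analysis on the fitness distance $d_t := n - f(x_t)$. From the standard analysis of the \ollea on \onemax (as pioneered in~\cite{DoerrDE15} and refined in later works), one knows that, conditional on the current value of $\lambda$, the probability that a single iteration strictly improves the fitness is at least $\Omega(\min(\lambda^2 d / n,\, 1))$: the best mutation offspring has more correctly-flipped zero-bits than harmful one-bit flips with probability of this order, and the crossover with bias $1/\lambda$ preserves this gain with constant conditional probability. Since each improvement reduces $d$ by at least $1$, averaging over the power-law choice of $\lambda$ (and using that the normalization constant $C_{\beta,u} = \Theta(1)$ because $\beta > 1$) yields
\begin{align*}
E[d_t - d_{t+1} \mid d_t = d] \;\ge\; c \sum_{\lambda = 1}^{u} \lambda^{-\beta} \min\!\left( \frac{\lambda^2 d}{n},\, 1 \right)
\end{align*}
for some constant $c > 0$ depending only on $\beta$.

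The main calculation is to evaluate this sum by splitting at the critical value $\lambda^\star := \sqrt{n/d}$, where the term $\min(\lambda^2 d/n, 1)$ transitions between its two branches. For $\lambda \le \lambda^\star$ the summand equals $\lambda^{2-\beta}(d/n)$ and, since $\beta \in (1,3)$ gives $2 - \beta \in (-1, 1)$, Lemma~\ref{lem:sum-lower} shows that the partial sum grows like $(\lambda^\star)^{3 - \beta}$. For $\lambda > \lambda^\star$ the summand is $\lambda^{-\beta}$, and since $\beta > 1$ the tail estimate from Lemma~\ref{lem:sum-upper} gives a contribution of order $(\lambda^\star)^{1 - \beta}$. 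Both parts turn out to be of the same order, leading to the two drift bounds
\begin{align*}
E[d_t - d_{t+1} \mid d_t = d] &= \Omega\!\left((d/n)^{(\beta - 1)/2}\right) \quad \text{if } d \ge n/u^2,\\
E[d_t - d_{t+1} \mid d_t = d] &= \Omega\!\left(d \, u^{3 - \beta}/n\right) \qquad\ \ \text{if } d < n/u^2,
\end{align*}
where the second bound comes from the fact that when $\lambda^\star > u$ the $\min$ always equals $\lambda^2 d/n$ and the full sum reduces to $(d/n) \sum_{\lambda = 1}^{u} \lambda^{2 - \beta}$, estimated again by Lemma~\ref{lem:sum-lower}.

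With these bounds in hand I would split the optimization into two phases. Phase~1, from $d = n$ down to $d = n/u^2$, uses the variable drift theorem with $h(x) = c_1 (x/n)^{(\beta - 1)/2}$; since $(\beta - 1)/2 < 1$, the integral $\int_{n/u^2}^{n} 1/h(x)\, dx$ evaluates to $O(n)$ independently of $u$. Phase~2, from $d = n/u^2$ down to $0$, uses the multiplicative drift theorem with rate $\delta = \Theta(u^{3 - \beta}/n)$, giving expected time $O\!\left((n \log n) / u^{3 - \beta}\right)$, which is $O(n)$ exactly when $u \ge \ln^{1/(3 - \beta)}(n)$, the hypothesis of the theorem. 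Summing the two phases yields $O(n)$ iterations in expectation. The main obstacle is the supporting one-iteration success-probability estimate $\Omega(\min(\lambda^2 d / n, 1))$: proving it uniformly over all $\lambda \in [1, u]$ requires tracking the concentration of the binomially many flipped bits, the bias introduced by selecting the best of $\lambda$ mutants, and the probability that the subsequent biased crossover inherits the net gain without destroying it; these three layers of conditioning have to be untangled carefully, while the drift aggregation and two-phase conclusion are essentially routine once that lemma is in place.
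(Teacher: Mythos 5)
Your proposal is correct and follows essentially the same route as the paper: the same per-$\lambda$ success probability $\Omega(\min(\lambda^2 d/n, 1))$ imported from the analysis of the \ollea in~\cite{DoerrDE15}, the same averaging over the power-law distribution with the split at $\lambda^\star = \sqrt{n/d}$ (equivalently $d = n/u^2$), and the same two resulting bounds $\Omega((d/n)^{(\beta-1)/2})$ and $\Omega(d\,u^{3-\beta}/n)$. The only difference is packaging — you aggregate via the variable-drift and multiplicative-drift theorems where the paper sums $\sum_d 1/p_d$ over fitness levels — and the two computations reduce to the identical integral and logarithmic sum, so nothing substantive changes.
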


When $\beta > 2$, the expected number of fitness evaluations per iteration is $\Theta(1)$ (see Lemma~\ref{lem:exp-lambda}). With this observation and Wald's equation, we obtain the following estimate for the runtime.

\begin{theorem}\label{thm:fitness-evaluations}
	If $\beta \in (2, 3)$ and $u \ge \ln^\frac{1}{3 - \beta}(n)$, then the expected number of fitness evaluations until the fast \ollea finds the optimum of \onemax function is $O(n)$.
\end{theorem}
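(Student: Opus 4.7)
The plan is to combine the iteration bound from Theorem~\ref{thm:iterations} with Wald's equation (Lemma~\ref{lem:wald}), using the fact that each iteration performs exactly $2\lambda$ fitness evaluations (one batch of $\lambda$ during mutation and one during crossover). The only substantive computation is bounding the expected cost of a single iteration, $E[X_1] = 2\,E[\lambda]$.

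First I would show that $E[\lambda] = O(1)$ whenever $\beta > 2$. Writing out the definition,
\[
	E[\lambda] = C_{\beta,u} \sum_{i=1}^{u} i \cdot i^{-\beta} = C_{\beta,u} \sum_{i=1}^{u} i^{-(\beta - 1)}.
\]
Because $\beta - 1 > 1$, Lemma~\ref{lem:sum-upper} (applied with $\alpha = \beta - 1 > 1$) gives $\sum_{i=1}^{u} i^{-(\beta-1)} \le \frac{\beta - 1}{\beta - 2}$. Since $C_{\beta,u} = (\sum_{i=1}^u i^{-\beta})^{-1} \le 1$ (the first term of the sum is already~$1$), both factors are bounded by constants depending only on $\beta$, hence $E[\lambda] = O(1)$. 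This is presumably the content of the cited Lemma~\ref{lem:exp-lambda}.

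Next I would verify the four hypotheses of Wald's equation with $X_t := 2\lambda_t$ being the number of fitness evaluations in iteration $t$ and $T$ the (random) number of iterations until the optimum is reached. Hypothesis~(1) holds because each $\lambda_t$ is drawn from the same distribution, so $E[X_t] = 2\,E[\lambda] < \infty$. Hypothesis~(2) follows from the fact that $\lambda_t$ is sampled independently at the start of iteration $t$, whereas the event $\{T \ge t\}$ is determined by iterations $1,\dots,t-1$; hence $\lambda_t$ and $\mathds{1}_{\{T \ge t\}}$ are independent. Hypothesis~(3) then gives $\sum_{t=1}^\infty E[|X_t| \mathds{1}_{\{T \ge t\}}] = E[X_1] \sum_{t=1}^\infty \Pr[T \ge t] = E[X_1] \cdot E[T] < \infty$, using Hypothesis~(4), which is exactly the $O(n)$ bound on $E[T]$ from Theorem~\ref{thm:iterations} (note $u \ge \ln^{1/(3-\beta)}(n)$ and $\beta \in (2,3) \subset (1,3)$, so its hypotheses are satisfied).

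Applying Wald's equation then yields
\[
	E\!\left[\sum_{t=1}^{T} X_t\right] = E[T] \cdot E[X_1] = O(n) \cdot O(1) = O(n),
\]
which is exactly the claimed bound on the expected number of fitness evaluations. The only non-routine step is verifying Hypothesis~(2); everything else is a direct combination of Theorem~\ref{thm:iterations}, Lemma~\ref{lem:sum-upper}, and Lemma~\ref{lem:wald}, so I do not anticipate a serious obstacle.
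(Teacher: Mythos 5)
Your proposal is correct and follows essentially the same route as the paper: bound $E[\lambda]=\Theta(1)$ for $\beta>2$ via Lemma~\ref{lem:sum-upper} (this is exactly Lemma~\ref{lem:exp-lambda}), verify the four hypotheses of Wald's equation for $X_t = 2\lambda_t$ with the key observation that $\lambda_t$ is sampled independently of the event $\{T_I \ge t\}$, and multiply by the $O(n)$ iteration bound from Theorem~\ref{thm:iterations}. The only cosmetic difference is that you inline the $\beta>2$ case of the expectation computation rather than invoking it as a separate lemma.
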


We start with the proof of Theorem~\ref{thm:iterations}. For the readers' convenience we split the proof into Lemmas~\ref{lem:progress-lambda} and~\ref{lem:progress-global}. The first lemma is essentially an interpretation of Lemma~7 in~\cite{DoerrDE15}.

\begin{lemma}\label{lem:progress-lambda}
	If $\lambda \le \sqrt{\frac{n}{d(x)}}$, where $d(x)$ is the current Hamming distance between the current individual $x$ and the optimum, then the probability $p_{d(x)}(\lambda)$ of increasing the fitness in one iteration is at least
	\[
		C\frac{d(x)\lambda^2}{n},	
	\] 
	where $C > 0$ is an absolute constant. If $\lambda > \sqrt{\frac{n}{d(x)}}$, then this probability is at least $C$.
\end{lemma}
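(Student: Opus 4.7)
The fast \ollea, conditional on the sampled value of $\lambda$, performs one iteration that is stochastically identical to one iteration of the classical \ollea of~\cite{DoerrDE15} with mutation rate $p = \lambda/n$ and crossover bias $c = 1/\lambda$, so the lemma is essentially a restatement of~\cite[Lemma 7]{DoerrDE15}. The plan is to sketch the underlying two-phase argument in our notation, writing $d := d(x)$ throughout.

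For the mutation phase I would first condition on the event that the mutation strength $\ell$ is of order $\lambda$ (say, $\ell \in [\max\{1, \lambda/2\}, 2\lambda]$), which happens with constant probability by a standard estimate on $\ell \sim \Bin(n, \lambda/n)$. On this event, the probability that a single mutation flips at least one of the $d$ zero-bits of $x$ is $1 - \binom{n-d}{\ell}/\binom{n}{\ell} = \Theta(\min\{1, \ell d/n\}) = \Theta(\lambda d/n)$, where $\lambda d / n \le 1$ follows from the assumption $\lambda \le \sqrt{n/d}$. Applying Lemma~\ref{lem:Bernoulli} to the $\lambda$ independent mutations then gives that at least one of them flips a zero-bit with probability $\Omega(\min\{1, \lambda^2 d/n\})$. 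Since, for fixed $\ell$, the fitness of a mutation equals $f(x) + (2k - \ell)$ where $k$ is its number of zero-bit flips, the mutation with largest $k$ is selected as the winner $x'$, so on the above event $x'$ carries at least one improving bit.

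For the crossover phase, supposing $x'$ has $k \ge 1$ improving bits (zero-bits of $x$ flipped in $x'$) and at most $\ell - k$ worsening bits, a single crossover offspring has fitness change $Y_0 - Y_1$, where $Y_0 \sim \Bin(k, 1/\lambda)$ and $Y_1 \sim \Bin(\ell - k, 1/\lambda)$ are independent. The event $\{Y_0 = 1, Y_1 = 0\}$ alone has probability $(k/\lambda)(1 - 1/\lambda)^{\ell - 1} = \Omega(k/\lambda)$ because $\ell = O(\lambda)$, so a single crossover offspring strictly improves over $x$ with probability at least $\Omega(1/\lambda)$; by Lemma~\ref{lem:Bernoulli} at least one of the $\lambda$ crossover offspring improves with probability $\Omega(1)$. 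Multiplying the two phase probabilities yields the claimed bound $\Omega(\min\{1, \lambda^2 d/n\})$, which for an appropriate absolute constant $C$ matches both regimes of the lemma.

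The hardest step, and the one that ultimately forces the tuning $c = 1/\lambda$, will be the crossover stage: the mutation winner $x'$ may carry up to $\lambda - 1$ worsening $1 \to 0$ flips against only a single improving $0 \to 1$ flip, and one might fear these extra flips are copied by the crossover and destroy the improvement. The bias $c = 1/\lambda$ is chosen precisely so that the dominant contribution to $\Pr[Y_0 > Y_1]$ comes from exactly one improving bit being copied and no worsening bit being copied, which requires the direct binomial computation above rather than a naive expectation argument (the latter would give a negative mean fitness change for $\lambda \ge 3$). This is also the delicate step in~\cite[Lemma 7]{DoerrDE15} that the present lemma invokes.
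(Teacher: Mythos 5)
Your argument is correct, but it takes a more self-contained route than the paper. The paper's proof is essentially a two-line derivation: it quotes the bound $C'\bigl(1 - (1 - \frac{d(x)}{n})^{\lambda^2/2}\bigr)$ directly from~\cite[Lemma~7]{DoerrDE15} as a black box and then applies Lemma~\ref{lem:Bernoulli} with $p = \frac{d(x)}{n}$ and exponent $\frac{\lambda^2}{2}$ to obtain $p_{d(x)}(\lambda) \ge C'\frac{d(x)\lambda^2/(2n)}{1 + d(x)\lambda^2/(2n)}$, from which both regimes follow with $C = C'/3$. You instead re-derive the content of that cited lemma by unpacking the two-phase structure (constant-probability conditioning on $\ell = \Theta(\lambda)$, the $\Theta(\min\{1,\ell d/n\})$ chance of a single mutant hitting a zero-bit, amplification over $\lambda$ mutants via Lemma~\ref{lem:Bernoulli}, and the $\{Y_0 = 1, Y_1 = 0\}$ event in the crossover phase giving an $\Omega(1/\lambda)$ per-offspring success probability that is again amplified to $\Omega(1)$). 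Your version buys transparency about \emph{why} the bound has the form $\lambda^2 d(x)/n$ and why the tuning $c = 1/\lambda$ matters, at the cost of having to police a few edge cases that the citation hides: e.g., for $\lambda = 1$ and $\ell = 2$ the factor $(1 - 1/\lambda)^{\ell - 1}$ vanishes, so there your crossover estimate must be restricted to the sub-event $\ell = 1$ (which still has constant probability); also your mutation-phase computation as written uses $\lambda \le \sqrt{n/d(x)}$, so the second regime needs the one-line remark that $1-(1-q)^{\lambda}$ is then already $\Omega(1)$. These are fixable details of a sketch rather than gaps; the paper avoids them entirely by outsourcing the whole phase analysis to~\cite{DoerrDE15}.
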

\begin{proof}
	By~\cite[Lemma~7]{DoerrDE15}, the probability of a true progress (that is, an iteration in which we find a strictly better individual than the current individual $x$) $p_{d(x)}(\lambda)$ is at least
	\[
		C'\left(1 - \left(1 - \frac {d(x)}n\right)^\frac{\lambda^2}{2}\right),	
	\]
	where $C' > 0$ is an absolute constant. By Lemma~\ref{lem:Bernoulli} we have

	\begin{align*}
		p_{d(x)}(\lambda) &\ge C'\left(1 - \left(1 - \frac {d(x)}n\right)^\frac{\lambda^2}{2}\right) 
						  \ge C'\frac{\frac{d(x)\lambda^2}{2n}}{1 + \frac{d(x)\lambda^2}{2n}}. \\
	\end{align*}
	If $\lambda \le \sqrt{\frac{n}{d(x)}}$, then we have $p_{d(x)}(\lambda) \ge C' \frac{d(x)\lambda^2}{3n}.$
	Note that $C \coloneqq \frac{C'}{3}$ is an absolute constant as well as $C'$. If $\lambda > \sqrt{\frac{n}{d(x)}}$, then $p_{d(x)}(\lambda) \ge \frac{C'}{3} = C.$

\end{proof}

\begin{lemma}\label{lem:progress-global}
	Let $\beta \in (1, 3)$. Then the probability $p_{d(x)}$ of having progress in one iteration given that the current distance to the optimum is $d(x)$ is at least 
	\[
		C(\beta)\frac{d(x)U^{3 - \beta}}{n},
	\]
	where $U = \min\{u, \sqrt{\frac{n}{d(x)}}\}$ and $C(\beta)$ is some constant independent of~$n$.
\end{lemma}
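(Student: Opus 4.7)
The plan is to apply the law of total probability to the random choice of $\lambda$ and restrict to those values for which Lemma~\ref{lem:progress-lambda} yields its stronger bound. Every integer $i \in [1..\lfloor U \rfloor]$ satisfies $i \le \sqrt{n/d(x)}$, so $p_{d(x)}(i) \ge C\, d(x) i^2/n$ for the absolute constant $C$ of Lemma~\ref{lem:progress-lambda}. Hence
\[
p_{d(x)} \;\ge\; \sum_{i=1}^{\lfloor U \rfloor} \Pr[\lambda = i]\, p_{d(x)}(i) \;\ge\; C\, C_{\beta,u}\, \frac{d(x)}{n} \sum_{i=1}^{\lfloor U \rfloor} i^{\,2-\beta},
\]
so the task reduces to lower-bounding $C_{\beta,u}$ and the residual sum by constants of the appropriate order.

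For the normalization, Lemma~\ref{lem:sum-upper} applied with $\beta > 1$ gives $\sum_{i=1}^{u} i^{-\beta} \le \beta/(\beta-1)$, hence $C_{\beta,u} \ge (\beta-1)/\beta$, a positive constant depending only on $\beta$. The main technical step is to prove $\sum_{i=1}^{\lfloor U \rfloor} i^{\,2-\beta} \ge c(\beta)\, U^{3-\beta}$ for a positive constant $c(\beta)$. I would handle this by a short case analysis on the sign of $2-\beta$. If $\beta \in (1,2]$ the summand is non-decreasing, so keeping only the upper half of the range, $i \in [\lceil U/2 \rceil ..\lfloor U \rfloor]$, gives at least $\lfloor U/2 \rfloor \cdot (U/2)^{2-\beta} = \Theta(U^{3-\beta})$; if $\beta \in (2,3)$ the summand is non-increasing, but the same tail still contributes at least $\lfloor U/2 \rfloor$ terms each of value at least $\lfloor U \rfloor^{2-\beta} = \Theta(U^{2-\beta})$, again $\Theta(U^{3-\beta})$. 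Alternatively, Lemma~\ref{lem:sum-lower} with $\alpha = \beta-2$ delivers $(U^{3-\beta}-1)/(3-\beta)$, and the stray $-1$ is absorbed by noting that for bounded $U$ the single term $i=1$ already contributes $\Omega(1) = \Omega(U^{3-\beta})$.

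Assembling the two constant-order factors with the $d(x)/n$ prefactor yields the stated bound with $C(\beta) = C \cdot \frac{\beta-1}{\beta} \cdot c(\beta)$. The only real obstacle is the brief case analysis on $\beta$ needed for the lower bound on the sum of $i^{\,2-\beta}$; everything else is a direct application of the tools collected in Section~\ref{sec:useful-tools}.
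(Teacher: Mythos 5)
Your proposal is correct and takes essentially the same route as the paper's proof: restrict the sum over $\lambda$ to the range $[1..U]$, apply the first case of Lemma~\ref{lem:progress-lambda} there, bound $C_{\beta,u} \ge \frac{\beta-1}{\beta}$ via Lemma~\ref{lem:sum-upper}, and show $\sum_{i} i^{2-\beta} = \Omega(U^{3-\beta})$ with a separate treatment of small $U$. The paper truncates at $\lceil U\rceil$ and uses Lemma~\ref{lem:sum-lower} directly where you truncate at $\lfloor U\rfloor$ and offer a half-range count, but these differences are cosmetic.
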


\begin{proof}
	Note that since $u$ is an integer number, we have $u \ge \lceil U \rceil$. Hence, by Lemma~\ref{lem:progress-lambda} we have
	\begin{align*} 
		p_{d(x)} =   \sum_{\lambda = 1}^u C_{\beta, u} \lambda^{-\beta} p_{d(x)}(\lambda) 
				 \ge C_{\beta, u} C \sum_{\lambda = 1}^{\lceil U \rceil}  \frac{d(x)\lambda^{2 - \beta}}{n}
				 =   C_{\beta, u} C \frac{d(x)}{n} \sum_{\lambda = 1}^{\lceil U \rceil}  \lambda^{2 - \beta}
	\end{align*} 

	If $U \ge 2$, then by Lemma~\ref{lem:sum-lower} we have
	\begin{align*}
			\sum_{\lambda = 1}^{\lceil U \rceil}  \lambda^{2 - \beta} &\ge \frac{U^{3 - \beta} - 1}{3 - \beta} \ge \frac{1 - 2^{\beta - 3}}{3 - \beta}U^{3 - \beta} \ge \frac{3}{8}U^{3 - \beta}.
	\end{align*}
	Otherwise, when $U < 2$ we have
	\begin{align*}
			\sum_{\lambda = 1}^{\lceil U \rceil} \lambda^{2 - \beta} &\ge 1 = U^{\beta - 3} U^{3 - \beta} \ge 2^{\beta - 3}U^{3 - \beta} \ge \frac{1}{4}U^{3 - \beta}.
	\end{align*}

	Finally, we estimate
	\begin{align*}
		p_{d(x)} \ge C_{\beta, u} C \frac{d(x)}{n} \sum_{\lambda = 1}^{\lceil U \rceil}  \lambda^{2 - \beta}
  		         \ge C_{\beta, u} C \frac{1}{4}\frac{d(x)}{n}U^{3 - \beta} = C(\beta) \frac{d(x)U^{3 - \beta} }{n}
	\end{align*}
	with $C(\beta) \coloneqq \frac{1}{4}C_{\beta, u} C$. Since $C$ is an absolute constant by Lemma~\ref{lem:progress-lambda} and since, by Lemma~\ref{lem:sum-upper}, $C_{\beta, u}$ is at least $\frac{\beta - 1}{\beta}$, which is a constant independent of $u$, $C(\beta)$ is also a constant independent of $u$. 
\end{proof}

In order to show a full picture we also computed the values of $p_{d(x)}$ for a wider range of parameters $u$ and $\beta$. The results are shown in Table~\ref{tbl:progress} and their proofs are included in the appendix. 

\begin{table}
	\caption{The probability $p_{d(x)}$ to increase fitness in one iteration for various values of parameters $\beta \in \R$ and $u \in \N$.}
	\label{tbl:progress}
	\centering
	\begin{tabular}{|c||l|l|}
		\hline
		$\beta$                        & $u \le \sqrt{\frac{n}{d(x)}}$                    & $u > \sqrt{\frac{n}{d(x)}}$  \\ \hline
		$<1$ 	                       & $\Omega\left(\frac{d(x)u^2}{n}\right)$           & $\Omega(1)$ \\ \hline
		$=1$	                       & $\Omega\left(\frac{d(x)u^2}{n\log(u)}\right)$    & $\ge \frac{1 + \ln(u) - \ln(\sqrt{\frac{n}{d(x)}})}{36\ln(u)}$ \\ \hline
		$(1, 3)$ & $\Omega\left(\frac{d(x)u^{3 - \beta}}{n}\right)$ & $\Omega\left(\sqrt{\frac{n}{d(x)}}^{1 - \beta}\right)$ \\ \hline
		$=3$ 	                       & $\Omega\left(\frac{d(x)\log(u)}{n}\right)$       & $\Omega\left(\frac{\log(n/d(x)) + 1}{n/d(x)}\right)$ \\ \hline	
		$>3$ 	                       & \multicolumn{2}{c|}{$\Omega\left(\frac{d(x)}{n}\right)$} \\ \hline	
    \end{tabular}
\end{table}




We are now ready to prove Theorem~\ref{thm:iterations}.

\begin{proof}[Proof of Theorem~\ref{thm:iterations}]
	We estimate the upper bound on the expectation of the runtime $T_I$ (in terms of iterations) as the sum of expected times until the algorithm leaves each fitness level. By Lemma~\ref{lem:progress-global} we have 
	\begin{align*}
		E[T_I] \le \sum_{d = 1}^n \frac{1}{p_{d}}
		       \le \frac{1}{C(\beta)} \left(\sum_{d = 1}^{\lfloor n/u^2 \rfloor} \frac{n}{du^{3 - \beta}} + \sum_{d = \lfloor n/u^2 \rfloor + 1}^n \sqrt{\frac{n}{d}}^{\beta - 1}\right).
	\end{align*}
	By Lemma~\ref{lem:sum-upper} we estimate the first sum
	\begin{align*}
		\sum_{d = 1}^{\lfloor n/u^2 \rfloor} \frac{n}{du^{3 - \beta}} &\le \frac{n\left(\ln\left(\frac{n}{u^2}\right) + 1\right)}{u^{3 - \beta}} \le \frac{n(\ln(n) + 1)}{\ln(n)} = n(1 + o(1)),
	\end{align*}
	where in the last inequality we used the assumption $u \ge \ln^{\frac{1}{3 - \beta}}(n)$. By Lemma~\ref{lem:sum-upper} we estimate the second sum as follows.

	\begin{align*}
		\sum_{d = \lfloor n/u^2 \rfloor + 1}^n &\sqrt{\frac{n}{d}}^{\beta - 1} \le \sum_{d = 1}^n \sqrt{\frac{n}{d}}^{\beta - 1} \\
																					&\le n^\frac{\beta - 1}{2} \sum_{d = 1}^n d^{-\frac{\beta - 1}{2}} 
																					\le n^\frac{\beta - 1}{2}  \frac{n^\frac{3 - \beta}{2}}{(3 - \beta)/2} = O(n).
	\end{align*}

	Therefore, we have
	\begin{align*}
		E[T_I] &\le \frac{1}{C(\beta)}\left(O(n) + O(n)\right) = O(n). \qedhere
	\end{align*}
\end{proof}

Before we prove Theorem~\ref{thm:fitness-evaluations} we first estimate $E[\lambda]$, which is half the expected cost of one iteration.

\begin{lemma}\label{lem:exp-lambda}
	If $\lambda$ is sampled from the heavy-tailed distribution with parameter $\beta$ and upper limit $u$, then its expected value is
	\begin{itemize}
		\item $E[\lambda] = \Theta(1)$, if $\beta > 2$,
		\item $E[\lambda] = \Theta(\log(u))$, if $\beta = 2$,
		\item $E[\lambda] = \Theta(u^{2 - \beta})$, if $\beta \in (1, 2)$,
		\item $E[\lambda] = \Theta(\frac{u}{\log(u)})$, if $\beta = 1$, and
		\item $E[\lambda] = \Theta(u)$, if $\beta < 1$,
	\end{itemize}
	where the asymptotic notation is for $u \to +\infty$.
\end{lemma}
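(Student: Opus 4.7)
The plan is to write
\[
E[\lambda] = \sum_{i=1}^{u} i \cdot C_{\beta,u}\, i^{-\beta} = \frac{\sum_{i=1}^{u} i^{\,1-\beta}}{\sum_{i=1}^{u} i^{-\beta}},
\]
and then estimate numerator and denominator separately by means of Lemmas~\ref{lem:sum-lower} and~\ref{lem:sum-upper}, using the appropriate form of each of those lemmas depending on the sign of the exponent. Note that $C_{\beta,u}^{-1}=\sum_{i=1}^u i^{-\beta}$ appears in the denominator.

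First I would handle the denominator. If $\beta>1$, Lemma~\ref{lem:sum-upper} gives $\sum_{i=1}^{u} i^{-\beta}\le \beta/(\beta-1)=\Theta(1)$, and Lemma~\ref{lem:sum-lower} yields a matching lower bound of $\Theta(1)$ (the $i=1$ term alone suffices), so the denominator is $\Theta(1)$. If $\beta=1$, Lemma~\ref{lem:sum-lower} gives a lower bound $\ln(u)$ and Lemma~\ref{lem:sum-upper} gives an upper bound $\ln(u)+1$, so the denominator is $\Theta(\log u)$. If $\beta<1$, applying Lemma~\ref{lem:sum-lower} with $\alpha=\beta$ and Lemma~\ref{lem:sum-upper} with $\alpha=\beta$ shows the denominator is $\Theta(u^{1-\beta})$.

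Next I would treat the numerator $\sum_{i=1}^{u} i^{\,1-\beta}$ by the same case analysis, now with exponent $\alpha = \beta-1$. For $\beta>2$ (exponent greater than $1$) Lemma~\ref{lem:sum-upper} gives a constant upper bound, and the $i=1$ term gives a constant lower bound, so it is $\Theta(1)$. For $\beta=2$ it is $\Theta(\log u)$ by the harmonic-series bounds. For $\beta\in(1,2)$ (exponent in $(0,1)$) Lemmas~\ref{lem:sum-lower} and~\ref{lem:sum-upper} both give $\Theta(u^{2-\beta})$. For $\beta=1$ the sum is exactly $u$. For $\beta<1$ (exponent negative) Lemmas~\ref{lem:sum-lower} and~\ref{lem:sum-upper} give $\Theta(u^{2-\beta})$.

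Finally I would combine the estimates by dividing: $\beta>2$ gives $\Theta(1)/\Theta(1)=\Theta(1)$; $\beta=2$ gives $\Theta(\log u)/\Theta(1)=\Theta(\log u)$; $\beta\in(1,2)$ gives $\Theta(u^{2-\beta})/\Theta(1)=\Theta(u^{2-\beta})$; $\beta=1$ gives $\Theta(u)/\Theta(\log u)=\Theta(u/\log u)$; and $\beta<1$ gives $\Theta(u^{2-\beta})/\Theta(u^{1-\beta})=\Theta(u)$. There is no real obstacle here; the only care needed is to keep track of the five different regimes in both the numerator and the denominator and to make sure the matching lower bounds from Lemma~\ref{lem:sum-lower} really are tight (for the divergent sums the integral bound is tight up to constants, and for the convergent sums the single term $i=1$ already gives a constant, which is tight).
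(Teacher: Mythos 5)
Your proposal is correct and follows essentially the same route as the paper: the paper likewise writes $E[\lambda]=C_{\beta,u}\sum_{i=1}^u i^{1-\beta}$, estimates $C_{\beta,u}=(\sum_{i=1}^u i^{-\beta})^{-1}$ via Lemmas~\ref{lem:sum-lower} and~\ref{lem:sum-upper} in the three regimes $\beta<1$, $\beta=1$, $\beta>1$, and then bounds the remaining sum by the same integral-comparison lemmas with exponent $\beta-1$. The only cosmetic difference is that you phrase it as a ratio of two sums rather than as a product with the normalization constant.
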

\begin{proof}
	First recall that $C_{\beta, u} = (\sum_{i = 1}^u i^{-\beta})^{-1}$. By Lemmas~\ref{lem:sum-lower} and~\ref{lem:sum-upper} we have 
	\begin{itemize}
		\item if $\beta < 1$, then $C_{\beta, u} = \Theta(u^{\beta - 1})$,
		\item if $\beta = 1$, then $C_{\beta, u} = \Theta(1/\ln(u))$, and
		\item if $\beta > 1$, then $C_{\beta, u} = \Theta(1)$.
	\end{itemize} 
	We compute
	\begin{align*}
		E[\lambda] = \sum_{i = 1}^u i\Pr[\lambda = i] = C_{\beta, u} \sum_{i = 1}^u i^{1 - \beta}.  
	\end{align*}

	If $\beta > 2$, then by Lemma~\ref{lem:sum-upper} we have
	\begin{align*}
		C_{\beta, u} \le E[\lambda] \le C_{\beta, u}\frac{\beta - 1}{\beta - 2}.
	\end{align*}
	Hence, $E[\lambda] = \Theta(1)$.

	If $\beta = 2$, then $\sum_{i = 1}^u i^{1 - \beta}$ is a partial sum of the harmonic series, thus it is $\Theta(\log(u))$.
	If $\beta < 2$, then by Lemmas~\ref{lem:sum-lower} and~\ref{lem:sum-upper} we have
	\begin{align*}
		C_{\beta, u} \frac{u^{2 - \beta} - 1}{2 - \beta} \le E[\lambda] \le C_{\beta, u} \frac{u^{2 - \beta}}{2 - \beta}.
	\end{align*}
	Therefore, $E[\lambda] = C_{\beta, u} \Theta(u^{2 - \beta})$. Together with the estimates of $C_{\beta, u}$ this proves the lemma for $\beta < 2$.
\end{proof}

We are now in the position to prove Theorem~\ref{thm:fitness-evaluations}

\begin{proof}[Proof of Theorem~\ref{thm:fitness-evaluations}]
	Let $\{\lambda_t\}_{t\in\N}$ be a sequence of random variables, each following the power-law distribution with parameters $\beta$ and $u$. We can assume that for all $t \in \N$ the fast \ollea chooses $\lambda \coloneqq \lambda_t$ in iteration $t$. Since the cost of one iteration is $2\lambda$ fitness evaluations ($\lambda$ for the mutation phase and $\lambda$ for the crossover phase), the total number of fitness evaluations $T_F$ has the same distribution as
	$
		\sum_{t = 1}^{T_I} 2\lambda_t.	
	$
	We aim at proving that the sequence $(\lambda_t)_{t \in \N}$ and $T_I$ allow to use Wald's equation (Lemma~\ref{lem:wald}). We show that conditions $(1)$--$(4)$ of this lemma are satisfied.

	\begin{enumerate}
		\item All $\lambda_t$ have the same expectation, which is finite by Lemma~\ref{lem:exp-lambda}.
		\item The event $T_I \ge t$ is independent of the outcome of $\lambda_t$, which implies that for all $i \in [1..u]$ we have $\Pr[T_I \ge t \mid \lambda_t = i] = \Pr[T_I \ge t]$. Therefore, we have
		\begin{align*}
			E[\lambda_t\mathds{1}_{\{T_I \ge t\}}] &= \sum_{i = 1}^{u} i \Pr[\lambda_t = i] \Pr[T_I \ge t \mid \lambda_t = i] \\
												   &= \Pr[T_I \ge t] \sum_{i = 1}^{u} i \Pr[\lambda_t = i] = \Pr[T_I \ge t] E[\lambda_t].
		\end{align*}
		\item By the previous condition we have
		\begin{align*}
			\sum_{t = 1}^{+\infty} E[|\lambda_t| \cdot \mathds{1}_{\{T_I \ge t\}}] &= \sum_{t = 1}^{+\infty} \Pr[T_I \ge t] E[\lambda_t] = E[\lambda] E[T_I],
		\end{align*}
		since for all $t \in \N$ we have $E[\lambda_t] = E[\lambda]$. By Theorem~\ref{thm:iterations} and Lemma~\ref{lem:exp-lambda}, both $E[\lambda]$ and $E[T_I]$ are finite, hence their product is finite as well.
		\item By Theorem~\ref{thm:iterations} $E[T_I]$ is finite.
	\end{enumerate}
	Thus, by Wald's inequality we have 
	\begin{align*}
		E[T_F] = E[T_I]E[2 \lambda_t].
	\end{align*}
	By Theorem~\ref{thm:iterations} and Lemma~\ref{lem:exp-lambda} we conclude
	\begin{align*}
		E[T_F] = O(n) \cdot \Theta(1) = O(n).   
		& \qedhere
	\end{align*}

\end{proof}

Although we are mostly interested in $\beta  \in (2, 3)$ and reasonably high upper limit $u$, a reader might find it interesting to see the upper bounds for the runtimes yielded by different parameters values.

For this reason we show the estimates for $E[T_I]$ and $E[T_F]$ for a wider range of parameters values in Table~\ref{tbl:runtimes} and their proofs are included in the appendix. 

\begin{table}
	\caption{Upper bounds on the expected number of iterations and expected number of fitness evaluations for different values of $\beta$ and $u$. The last column is calculated by Wald's equation in the same manner as in Theorem~\ref{thm:fitness-evaluations}.}
	\label{tbl:runtimes}
	\begin{scriptsize}
		\begin{tabular}{|c||c|c|}
			\hline
		    $\beta$ & $E[T_I]$ &  $E[T_F] = 2E[T_I]E[\lambda]$ \\ \hline
			$<1$ 	& \begin{tabular}{ll}
						$O(n)$                                        & if $u \ge \sqrt{\ln(n)}$ \\
						$O\left(\frac{n}{u^2}\log\frac{n}{u^2}\right)$ & if $u \le \sqrt{\ln(n)}$ 
					  \end{tabular}  & 
					  \begin{tabular}{ll}
						$O(nu)$                                        & if $u \ge \sqrt{\ln(n)}$ \\
						$O\left(\frac{n}{u}\log\frac{n}{u^2}\right)$ & if $u \le \sqrt{\ln(n)}$ 
					  \end{tabular} \\ \hline
			$=1$	& \begin{tabular}{ll}
						$O(n)$ & if $u \ge \sqrt{\ln(n)\ln\ln(n)}$ \\
						$O\left(\frac{n}{u^2}\log\left(\frac{n}{u^2}\right)\log(u)\right)$ & if $u \le \sqrt{\ln(n)\ln\ln(n)}$ 
					  \end{tabular}	 & 
					  \begin{tabular}{ll}
						$O(\frac{nu}{\log(u)})$ & if $u \ge \sqrt{\ln(n)\ln\ln(n)}$ \\
						$O\left(\frac{n}{u}\log\left(\frac{n}{u^2}\right)\right)$ & if $u \le \sqrt{\ln(n)\ln\ln(n)}$ 
					  \end{tabular}\\ \hline
			$(1, 2)$& \multirow{3}{*}{\begin{tabular}{ll}
										$O(n)$ & if $u \ge \ln^{\frac{1}{3 - \beta}}(n)$ \\
										$O\left(\frac{n}{u^{3 - \beta}} \log\left(\frac{n}{u^2}\right)\right)$ & if $u < \ln^{\frac{1}{3 - \beta}}(n)$
					  \end{tabular}} & 
					  \begin{tabular}{ll}
						$O(nu^{2 - \beta})$ & if $u \ge \ln^{\frac{1}{3 - \beta}}(n)$ \\
						$O\left(\frac{n}{u} \log\left(\frac{n}{u^2}\right)\right)$ & if $u < \ln^{\frac{1}{3 - \beta}}(n)$
	 				  \end{tabular} \\ \cline{1-1}\cline{3-3}
			$=2$ 	& & 
					  \begin{tabular}{ll}
						$O(n\log(u))$ & if $u \ge \ln(n)$ \\
						$O\left(\frac{n\log(u)}{u} \log\left(\frac{n}{u^2}\right)\right)$ & if $u < \ln(n)$
					  \end{tabular} \\ \cline{1-1}\cline{3-3}
			$(2, 3)$& & 
					  \begin{tabular}{ll}
						$O(n)$ & if $u \ge \ln^{\frac{1}{3 - \beta}}(n)$ \\
						$O\left(\frac{n}{u^{3 - \beta}} \log\left(\frac{n}{u^2}\right)\right)$ & if $u < \ln^{\frac{1}{3 - \beta}}(n)$
					  \end{tabular} \\ \hline
			$=3$	& \begin{tabular}{ll}
						$O(n\log\log(u))$ & if $u \ge n^\frac{1}{\ln\ln(n)}$ \\
						$O\left(\frac{n}{\log(u)}\log\left(\frac{n}{u^2}\right)\right)$ & if $u < n^\frac{1}{\ln\ln(n)}$
					  \end{tabular}  &  
					  \begin{tabular}{ll}
						$O(n\log\log(u))$ & if $u \ge n^\frac{1}{\ln\ln(n)}$ \\
						$O\left(\frac{n}{\log(u)}\log\left(\frac{n}{u^2}\right)\right)$ & if $u < n^\frac{1}{\ln\ln(n)}$
					  \end{tabular} \\ \hline
			$>3$	& $O(n\log(n))$	& 	$O(n\log(n))$ \\ \hline
		\end{tabular}
	\end{scriptsize}
\end{table}

In the proofs of Theorems~\ref{thm:iterations} and~\ref{thm:fitness-evaluations} we aimed at delivering only asymptotic upper bounds disregarding the leading constant in order not to reduce the readability of the paper. However, for the complete picture, without proof we estimate the leading constant delivered by our arguments.

Recall that $C(\beta) = \frac{1}{12}C_{\beta, u} C'$. From the proof of Lemma~7 in~\cite{DoerrDE15} we can show that $C'$ which is used in Lemma~\ref{lem:progress-lambda} is at least $\frac{1}{e}(1 - \exp(-\exp(-\frac{3}{2}))) \approx 0.0735$. For any upper bound $u = \omega(1)$ we also have $C_{\beta, u} \ge \frac{\beta - 1}{\beta}$. Hence, we estimate the upper bound on the leading constant.
\begin{align*}
	\frac{1}{C(\beta)}\left(1 + \frac{2}{3 -\beta}\right) &\le \frac{12\beta(5 - \beta)}{(3 - \beta)(\beta - 1)C'} \approx 164\frac{\beta(5 - \beta)}{(3 - \beta)(\beta - 1)}.
\end{align*}

Taking into account the leading constant hidden in Lemma~\ref{lem:exp-lambda}, which is $\frac{\beta - 1}{\beta - 2}$ if $\beta > 2$, we estimate the upper bound on the leading constant for $E[T_F]$ delivered by Theorem~\ref{thm:fitness-evaluations} as
\begin{equation}
	328 \cdot \frac{\beta(5 - \beta)}{(3 - \beta)(\beta - 2)}. \label{leading-constant-estimation}
\end{equation}

\subsection{Lower Bound}
\label{sec:lower-bound}

In this section we prove the tightness of our upper bounds by showing a lower bound of $\Omega(n)$ fitness evaluations for the runtime of the fast \ollea on \onemax. This is a special case of a deeper result~\cite{TeytaudG06}, which showed the same lower bound for all comparison-based algorithms (which the \ollea is). For the readers' convenience, we give an elementary proof as well.



\begin{theorem}
	\label{thm:lower-bound}
	The expected runtime of the fast \ollea with parameter $\beta \in \R$ and any upper limit $u \in \N$ on the \onemax function is at least $\Omega(\frac{n}{E[\lambda]})$ iterations, where $E[\lambda]$ is estimated as in Lemma~\ref{lem:exp-lambda}, and $\Omega(n)$ fitness evaluations.
\end{theorem}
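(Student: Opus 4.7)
The plan is to compare the maximum possible fitness gain per iteration with the total gain required to reach the optimum. Let $d_0$ denote the Hamming distance between the initial search point and the optimum $1^n$. Since the initial individual is uniformly random, $d_0 \sim \Bin(n, 1/2)$ and in particular $E[d_0] = n/2$. The total fitness improvement accumulated over the whole run equals $d_0$ exactly, because the algorithm is elitist and stops at the optimum.

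The crucial structural observation is that in any iteration $t$, if $\lambda_t$ and $\ell_t$ are the sampled offspring number and mutation strength, then every mutant (and hence the winner $x'$) differs from $x$ in exactly $\ell_t$ bits, and every crossover offspring agrees with $x$ on all positions where $x_i = x'_i$. Consequently the new parent $y$ differs from $x$ in at most $\ell_t$ bits, which immediately gives $G_t := f(y)-f(x) \le \ell_t$ for \onemax. Since $\ell_t$ conditionally on $\lambda_t$ is $\Bin(n, \lambda_t/n)$, we have $E[\ell_t \mid \lambda_t] = \lambda_t$, so unconditionally $E[\ell_t] = E[\lambda]$.

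Next I would apply Wald's equation (Lemma~\ref{lem:wald}) to the i.i.d.\ sequence $(\ell_t)_{t\in\N}$ and the stopping time $T_I$. The event $\{T_I \ge t\}$ is determined by the random choices made in iterations $1,\dots,t-1$ and is therefore independent of $\ell_t$, so condition~(2) of the lemma holds; conditions (1) and (3) follow from $E[\ell_t] = E[\lambda] < \infty$, and condition (4) $E[T_I] < \infty$ may be assumed (otherwise the statement is vacuous). Taking expectations in $d_0 = \sum_{t=1}^{T_I} G_t \le \sum_{t=1}^{T_I} \ell_t$ gives
\[
\tfrac{n}{2} \;=\; E[d_0] \;\le\; E\!\left[\sum_{t=1}^{T_I}\ell_t\right] \;=\; E[T_I]\cdot E[\ell_1] \;=\; E[T_I]\cdot E[\lambda],
\]
hence $E[T_I] \ge n/(2E[\lambda]) = \Omega(n/E[\lambda])$.

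For the fitness-evaluation bound I would simply repeat the Wald's-equation argument already carried out in the proof of Theorem~\ref{thm:fitness-evaluations}, yielding $E[T_F] = 2\,E[T_I]\,E[\lambda]$, and combine it with the iteration lower bound above to obtain $E[T_F] \ge n = \Omega(n)$. The only delicate point is the Wald bookkeeping (verifying independence of $\{T_I\ge t\}$ from $\ell_t$ and $\lambda_t$, and handling the case $E[T_I]=\infty$ trivially); the per-iteration bound $G_t \le \ell_t$ and the initial-distance computation are straightforward.
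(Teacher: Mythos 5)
Your proposal is correct and takes essentially the same approach as the paper: you bound the per-iteration progress by the mutation strength $\ell$, whose expectation is $E[\lambda]$, compare this with the expected initial distance $n/2$, and then use Wald's equation exactly as in Theorem~\ref{thm:fitness-evaluations} to pass to fitness evaluations. The only (immaterial) difference is that you turn the per-iteration bound into the iteration lower bound via a second application of Wald's equation to the sequence $(\ell_t)$, whereas the paper invokes the additive drift theorem of He and Yao at that point.
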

\begin{proof}
	The progress in one iteration cannot be greater than the number $\ell$ of bits which we flip in each mutant, since we cannot obtain more than $\ell$ new one-bits in the winner $x'$ of the mutation phase. Therefore, after we have sampled $\lambda$, the expected progress is
	\begin{align*}
		E[f(y) - f(x) \mid \lambda] \le E[\ell \mid \lambda] = \lambda.
	\end{align*}

	The expected progress in one iteration thus is 
	\begin{align*}
		E[f(y) - f(x)] = \sum_{i = 1}^u \Pr[\lambda = i] E[f(y) - f(x) \mid \lambda = i] \le E[\lambda].
	\end{align*}

	Let $x_0$ be the initial individual. Since it is chosen uniformly at random, its expected fitness is $E[f(x_0)] = \frac{n}{2}$. Hence, by the additive drift theorem~\cite{HeY01} the expectation of the number of iterations $T_I$ before the algorithm finds the optimum is at least 
	\begin{align*}
		E[T_I] \ge \frac{n - E[f(x_0)]}{E[\lambda]} = \frac{n}{2E[\lambda]}.
	\end{align*}

	Now we can use Wald's equation as we did in the proof of Theorem~\ref{thm:fitness-evaluations}. We obtain
	\begin{align*}
		E[T_F] = E[T_I]E[2\lambda] \ge \frac{n}{2 E[\lambda]} \cdot 2E[\lambda] = n. & \qedhere
	\end{align*}

\end{proof}

\section{Experiments}
\label{sec:experiments}
\newcommand{\thefontsize}{\tiny}

Our theoretical findings show that the fast \ollea with the natural choice $\beta \in (2,3)$ has a linear runtime on \onemax, which matches the performance of the self-adjusting \ollea.
Due to their asymptotic nature, our results cannot indicate which of the two linear-time algorithms is faster, how the fast \ollea compares with other algorithms on reasonable problem sizes,
and how its performance depends on $\beta \in (2,3)$. For the latter, the only estimate we have from theory,
eq.~\eqref{leading-constant-estimation}, provides a very large upper bound on the constant factor, which could suggest that $\beta = 2.5 + \varepsilon$ may be better than $\beta = 2.5 - \varepsilon$
for $0 < \varepsilon < 0.5$, but without a matching lower bound this is speculative. To answer these questions, but also to investigate the performance on a slightly less artificial problem, we performed a series of experiments.

As algorithms, we regarded randomized local search (RLS) and the \oea with a standard bit mutation
as well as the self-adjusting \ollea, which controls $\lambda$ (and thus $p = \lambda / n$ and $c = 1/\lambda$) via the one-fifth success rule~\cite{DoerrD18}.

We have also considered the version of the \ollea with the one-fifth rule with an upper limit of $2 \ln (n+1)$ on the value of $\lambda$,
introduced in~\cite{BuzdalovD17}, since it showed a much better performance on the \maxsat problem than without this upper limit.
For the same reason, we also consider the fast \ollea with the same upper limit of $2 \ln (n+1)$ on the value of $\lambda$,
which is imposed by setting the distribution parameter $u$ to $u = 2 \ln (n+1)$).
To investigate the effect of varying $u$ further, we also conduct a series of experiments with a fixed problem size $n$ and different values of $u$.

For the fast \ollea, we used the values of $\beta \in \{2.1, 2.3, 2.5, 2.7, 2.9\}$
unless noted otherwise.
In all the adaptive versions of the \ollea, the initial value of $\lambda$ is set to 1.

The source code used to perform these experiments is a part of a larger project dedicated to the \ollea available on GitHub\footnote{\url{https://github.com/mbuzdalov/generic-onell}} and as the supplementary material for this paper.

\subsection{Implementation Details and Their Discussion}

In all runs we used slightly modified versions of the algorithms to avoid counting obviously unnecessary fitness evaluations. 
The particular changes are as follows.
\begin{itemize}
    \item In the \oea, if standard bit mutation flips zero bits, then we resample the offspring until it is different from the parent. This is equivalent to not counting the fitness evaluation of the offspring identical to the parent.
    \item In all versions of the \ollea, we resample $\ell$ until $\ell \ne 0$. This is equivalent to not counting the fitness evaluations in iterations with $\ell = 0$ because here all offspring are identical to the parent. In the crossover phase, samples taking all bits from the parent $x$ are repeated (without evaluating the fitness of the copy of the parent) and samples taking all bits from the mutation winner $x'$ are not evaluated (that is, do not count towards the number of fitness evaluations).
Additionally, $x'$ also participates in the selection of the best among $x$ and the crossover results~$y^{(i)}$. When there is a tie, then the crossover winner has a higher priority than $x'$.
\end{itemize}

We consider these natural modifications instead of the original algorithms in this section, since we are sure that anyone implementing these algorithms for solving practical problems would do the same. For a practitioner it does not make sense to waste fitness evaluations on individuals which are identical to their parents, while in theoretical works these are often counted since constant factors are often ignored. We note that similar modifications of algorithms were called \emph{practice-aware} in~\cite{practice-aware}. We note that there are much more ways to tune the runtime of the \ollea in a practical application, see, e.g.,~\cite{GoldmanP14}. In contrast to the modifications described above, for these it is not clear to what extent they are useful in general or only for particular problems. For this reason, we did not consider them in this work.

Clearly our theoretical results from Section~\ref{sec:analysis} apply to these mildly modified algorithms. For the upper bounds it is enough to note that by resampling identical individuals and by having $x'$ participate in the selection, the probability to have a progress in one iteration only increases. Thus, repeating the arguments from Theorem~\ref{thm:iterations} we obtain the same upper bound on the expected number of iterations. Since our implementation does not affect the choice of $\lambda$, its expected value $E[\lambda]$ stays the same. The cost of one iteration is at most $2\lambda$ (but can be smaller). Thus, by Wald's equation we obtain the same upper bound on the expected number of fitness evaluations as in Theorem~\ref{thm:fitness-evaluations}. For the lower bound we use the same arguments as in Theorem~\ref{thm:lower-bound}, with the only change that since we cannot choose $\ell = 0$, we have
\begin{align*}
	E[\ell \mid \lambda] = \frac{\lambda}{1 - \left(1 - \frac 1\lambda\right)^\lambda} \le \frac{\lambda}{1 - \frac{1}{e}},
\end{align*}
which still gives us a lower bound of $\Omega(n)$ fitness evaluations.

\subsection{Experimental Setup}

The experiments were performed on the \onemax function and on random satisfiable instances of the \maxsat problem, that is, the problem of maximizing the number of satisfied clauses in a Boolean formula represented in conjunctive normal form. The second problem was chosen for two reasons. First, it is a more practical problem than \onemax, second, there are already theoretical and empirical results for the \ollea on this function (see~\cite{BuzdalovD17}). For this problem on $n$ variables, the number of clauses was chosen to be $4 n \ln n$. An all-ones bit string is assumed to be a planted optimal solution; this is without loss of generality, as all considered algorithms are unbiased. For each clause, three participating variables and their signs (i.e.,~whether it is negated or not) are sampled uniformly and independently until this clause is satisfied by the planted solution (that is, not all three variables are negated). Note that these are easy instances of the \maxsat problem, so the presented results on this problem should not be considered as if the proposed algorithms are competitive in solving this problem in general. However, these instances have a lower fitness-distance correlation, which makes them harder in particular for the \ollea.

To speed-up the experiments, we used the incremental fitness evaluation technique,
which is more commonly seen in gray-box optimization and in problem-aware solvers. We note that this led only to a faster implementation of the algorithm, not to a different algorithm behavior. In particular, the number of iterations or fitness evaluations performed are not affected. We modified the implementation as follows.

During mutation we do not copy the parent individual,
but instead directly generate the bit indices which are different in the parent and the offspring (the ``patch''). Following that, we evaluate the fitness of the offspring
based on the fitness of the parent and the patch. For RLS and the $(1+1)$~EA, if the new fitness is at least as good as the one of the parent, we apply the patch to the parent,
turning it into the offspring. For the \ollea, we select the best patch out of all the mutants' patches (based on their fitness values).
The subsequent applications of crossover translate to subsamplings of that patch, so that fitness evaluation is again based on the parent's fitness.

For \onemax, evaluation of the offspring's fitness based on the parent's fitness and the patch is rather straightforward: only the bits at the affected indices are checked.
This results in an expected $O(1)$ amount of work per each iteration of both RLS and the $(1+1)$~EA, and in the $\Theta(\lambda^2)$ amount of work for the \ollea,
which still helps much because $\lambda$ is typically much smaller than $n$.

For \maxsat, the incremental evaluation is more difficult as it involves some preprocessing on the side of the fitness function.
It amounts to constructing lists of clauses affected by the changed bits and to evaluating the satisfaction status of these clauses before
and after the change. For the logarithmic density of clauses (that is, the ratio of the number of clauses to the number of variables) employed in this paper, this amounts to $\Theta(\log n)$ expected work per iteration of RLS and the $(1+1)$~EA,
and to $\Theta(\lambda^2\log n)$ expected work for the \ollea, which is still faster than direct evaluation, but less efficient than what is possible for \onemax.

We also note that the particular structure of all the considered algorithms also allows to optimize the memory requirements:
the memory used by RLS and the $(1+1)$~EA is $\Theta(n)$ words resulting from storing a single bit vector,
whereas the \ollea uses $\Theta(n + \lambda)$ words in expectation, as only the best patches for each of the phases need to be stored.

In our experiments we chose the problem sizes $n$ to be powers of two, so that the asymptotic behavior of the algorithms is easier to investigate visually.
For \onemax, we limit the problem size to $2^{22}$, and for \maxsat, the upper limit is $2^{16}$. These sizes were derived from the affordable computational times.
We did not reach the size of $2^{20}$ on \maxsat as in~\cite{BuzdalovD17}, because the incremental fitness evaluations have a weaker impact with fast mutation.
Indeed, whenever $\lambda$ is sampled from a heavy-tailed distribution, the distribution of $\lambda^2$, and hence of the wall-clock running time, has an even heavier tail,
so occasional high values of $\lambda$ result in very expensive iterations.
For each algorithm, each problem setting, and each problem size, 100 independent runs were performed. For the \maxsat problem, a new random instance was created for each run.

Our runtime results are shown in Figures~\ref{exp:onemax}-\ref{exp:maxsat:varcap}. In Figures~\ref{exp:onemax}-\ref{exp:maxsat:capped} the $x$-axis indicates the problem size in a logarithmic scale,
and the $y$-axis indicates the ratio of the runtime to the problem size. In this visualization a linear runtime results in a horizontal plot
and any runtime in $\Theta(n \log n)$ gives a linearly increasing plot.

\subsection{Runtimes on \onemax}

\begin{figure}[!t]
\begin{tikzpicture}
\begin{axis}[width=1\linewidth, height=0.5\textheight, xmode=log, log base x=2, grid=major, ymin=0,
             xlabel={Problem size $n$}, ylabel={Evaluations / $n$},
			 legend pos=north west, cycle list name=myplotcycle,
			 xmin = 20, xmax = 6291456, ymax = 25,
             every axis plot/.append style={thick}]
			 \addplot plot [error bars/.cd, y dir=both, y explicit] coordinates {(32,5.034)+-(0,1.384)(64,5.349)+-(0,1.348)(128,6.065)+-(0,1.183)(256,6.14)+-(0,0.8272)(512,6.481)+-(0,0.6865)(1024,6.545)+-(0,0.4771)(2048,6.685)+-(0,0.4268)(4096,6.939)+-(0,0.4276)(8192,6.992)+-(0,0.3268)(16384,7.139)+-(0,0.3051)(32768,7.184)+-(0,0.2784)(65536,7.268)+-(0,0.2833)(131072,7.358)+-(0,0.2867)(262144,7.455)+-(0,0.2917)(524288,7.497)+-(0,0.2314)(1048576,7.547)+-(0,0.2273)(2097152,7.581)+-(0,0.2203)(4194304,7.696)+-(0,0.2407)};
			 \addlegendentry{\thefontsize $\lambda\in [1..2\ln (n + 1)]$};
			 \addplot plot [error bars/.cd, y dir=both, y explicit] coordinates {(32,4.897)+-(0,1.976)(64,5.67)+-(0,1.352)(128,5.886)+-(0,0.9753)(256,6.231)+-(0,0.7924)(512,6.343)+-(0,0.6007)(1024,6.339)+-(0,0.4546)(2048,6.507)+-(0,0.2927)(4096,6.566)+-(0,0.2452)(8192,6.61)+-(0,0.199)(16384,6.633)+-(0,0.136)(32768,6.638)+-(0,0.08875)(65536,6.661)+-(0,0.06778)(131072,6.678)+-(0,0.05667)(262144,6.683)+-(0,0.03487)(524288,6.688)+-(0,0.02944)(1048576,6.695)+-(0,0.02066)(2097152,6.698)+-(0,0.01393)(4194304,6.696)+-(0,0.01111)};
			 \addlegendentry{\thefontsize $\lambda\in [1..n]$};
			 \addplot plot [error bars/.cd, y dir=both, y explicit] coordinates {(32,6.564)+-(0,1.98)(64,7.556)+-(0,1.938)(128,8.47)+-(0,1.658)(256,9.692)+-(0,1.504)(512,11.61)+-(0,1.986)(1024,11.99)+-(0,1.434)(2048,12.9)+-(0,1.517)(4096,13.69)+-(0,1.317)(8192,15.12)+-(0,1.567)(16384,15.85)+-(0,1.453)(32768,16.35)+-(0,1.384)(65536,17.18)+-(0,1.276)(131072,17.67)+-(0,1.404)(262144,18.32)+-(0,1.285)(524288,18.84)+-(0,1.183)(1048576,19.16)+-(0,1.236)};
			 \addlegendentry{\thefontsize $\lambda\sim\text{pow}(2.1)$};
			 \addplot plot [error bars/.cd, y dir=both, y explicit] coordinates {(32,5.83)+-(0,2.023)(64,6.652)+-(0,1.799)(128,8.397)+-(0,1.732)(256,8.981)+-(0,1.586)(512,9.356)+-(0,1.341)(1024,10.18)+-(0,1.243)(2048,10.67)+-(0,1.104)(4096,11.15)+-(0,0.9422)(8192,11.66)+-(0,0.9335)(16384,11.73)+-(0,0.7974)(32768,12.08)+-(0,0.7509)(65536,12.2)+-(0,0.8012)(131072,12.44)+-(0,0.5857)(262144,12.63)+-(0,0.5652)(524288,12.64)+-(0,0.468)(1048576,12.74)+-(0,0.4199)(2097152,12.79)+-(0,0.4102)};
			 \addlegendentry{\thefontsize $\lambda\sim\text{pow}(2.3)$};
			 \addplot plot [error bars/.cd, y dir=both, y explicit] coordinates {(32,5.623)+-(0,1.995)(64,6.562)+-(0,1.933)(128,7.505)+-(0,1.601)(256,8.128)+-(0,1.3)(512,8.864)+-(0,1.529)(1024,9.397)+-(0,1.269)(2048,9.924)+-(0,0.9907)(4096,10.16)+-(0,0.9355)(8192,10.45)+-(0,0.6846)(16384,10.74)+-(0,0.6981)(32768,10.96)+-(0,0.5783)(65536,11.07)+-(0,0.6053)(131072,11.24)+-(0,0.4516)(262144,11.25)+-(0,0.2744)(524288,11.43)+-(0,0.2625)(1048576,11.5)+-(0,0.2572)(2097152,11.58)+-(0,0.2175)(4194304,11.64)+-(0,0.1711)};
			 \addlegendentry{\thefontsize $\lambda\sim\text{pow}(2.5)$};
			 \addplot plot [error bars/.cd, y dir=both, y explicit] coordinates {(32,5.62)+-(0,2.157)(64,6.659)+-(0,1.696)(128,7.216)+-(0,1.665)(256,8.072)+-(0,1.602)(512,8.917)+-(0,1.384)(1024,9.177)+-(0,1.177)(2048,9.574)+-(0,0.9223)(4096,10.07)+-(0,0.9517)(8192,10.39)+-(0,0.6924)(16384,10.79)+-(0,0.7591)(32768,11.15)+-(0,0.7254)(65536,11.44)+-(0,0.5174)(131072,11.53)+-(0,0.4989)(262144,11.76)+-(0,0.4677)(524288,11.92)+-(0,0.3562)(1048576,12.12)+-(0,0.3503)(2097152,12.25)+-(0,0.3292)(4194304,12.41)+-(0,0.319)};
			 \addlegendentry{\thefontsize $\lambda\sim\text{pow}(2.7)$};
			 \addplot plot [error bars/.cd, y dir=both, y explicit] coordinates {(32,5.352)+-(0,1.79)(64,6.262)+-(0,2.147)(128,7.035)+-(0,1.872)(256,7.88)+-(0,1.587)(512,8.64)+-(0,1.45)(1024,9.093)+-(0,1.223)(2048,9.788)+-(0,1.377)(4096,10.35)+-(0,1.123)(8192,11.05)+-(0,1.014)(16384,11.5)+-(0,1.189)(32768,11.84)+-(0,0.8389)(65536,12.3)+-(0,0.9423)(131072,12.47)+-(0,0.6984)(262144,13.01)+-(0,0.6305)(524288,13.41)+-(0,0.7717)(1048576,13.7)+-(0,0.6636)(2097152,14.01)+-(0,0.6249)(4194304,14.18)+-(0,0.6367)};
			 \addlegendentry{\thefontsize $\lambda\sim\text{pow}(2.9)$};
			 \addplot plot [error bars/.cd, y dir=both, y explicit] coordinates {(32,4.734)+-(0,1.413)(64,6.022)+-(0,2.072)(128,7.082)+-(0,2.2)(256,8.128)+-(0,2.017)(512,9.287)+-(0,2.104)(1024,10.64)+-(0,2.216)(2048,11.79)+-(0,1.836)(4096,12.94)+-(0,2.013)(8192,14.9)+-(0,2.771)(16384,15.9)+-(0,2.362)(32768,16.81)+-(0,2.072)(65536,17.72)+-(0,2.071)(131072,19.14)+-(0,2.189)(262144,20.5)+-(0,2.747)(524288,21.26)+-(0,2.337)(1048576,22.63)+-(0,2.252)(2097152,23.64)+-(0,2.432)(4194304,25.46)+-(0,2.464)};
			 \addlegendentry{\thefontsize (1+1) EA};
			 \addplot plot [error bars/.cd, y dir=both, y explicit] coordinates {(32,3.358)+-(0,1.154)(64,4.005)+-(0,1.155)(128,4.771)+-(0,1.219)(256,5.453)+-(0,1.191)(512,6.094)+-(0,1.196)(1024,6.762)+-(0,1.004)(2048,7.472)+-(0,1.123)(4096,8.072)+-(0,1.281)(8192,8.738)+-(0,1.151)(16384,9.676)+-(0,1.408)(32768,10.15)+-(0,1.169)(65536,10.89)+-(0,1.246)(131072,11.48)+-(0,1.186)(262144,12.46)+-(0,1.341)(524288,13.26)+-(0,1.385)(1048576,13.77)+-(0,1.008)(2097152,14.51)+-(0,1.43)(4194304,15.16)+-(0,1.438)};
			 \addlegendentry{\thefontsize RLS};
             \addplot plot [error bars/.cd, y dir=both, y explicit] coordinates {(32,5.41)+-(0,1.318)(64,5.958)+-(0,1.204)(128,6.753)+-(0,0.9653)(256,7.325)+-(0,1.092)(512,7.709)+-(0,0.8988)(1024,8.123)+-(0,0.9054)(2048,8.578)+-(0,1.068)(4096,9.005)+-(0,0.8088)(8192,9.384)+-(0,0.7802)(16384,9.811)+-(0,0.7353)(32768,10.18)+-(0,0.6723)(65536,10.41)+-(0,0.5078)(131072,10.8)+-(0,0.6717)(262144,11.08)+-(0,0.5191)(524288,11.48)+-(0,0.6667)(1048576,11.8)+-(0,0.6183)(2097152,11.97)+-(0,0.5923)(4194304,12.35)+-(0,0.6031)};
			 \addlegendentry{\thefontsize $\lambda=2\sqrt{\frac{\lnp n \lnp \lnp n}{\lnp\lnp\lnp n}}$};
\end{axis}
\end{tikzpicture}
\caption{Mean runtimes and their standard deviation of different algorithms on \onemax benchmark problem. By $\lambda \in [1..u]$ we denote the self-adjusting parameter choice via the one-fifth rule in the interval $[1..u]$. The indicated interval for each value $X$ is $[E[X] - \sigma(X), E[X] + \sigma(X)]$, where $\sigma(X)$ is the standard deviation of~$X$. We write $\lnp x := \ln (x+1)$. By $pow(x)$ we denote the power-law distribution with parameters $u = n$ and $\beta = x$.}
\label{exp:onemax}
\end{figure}
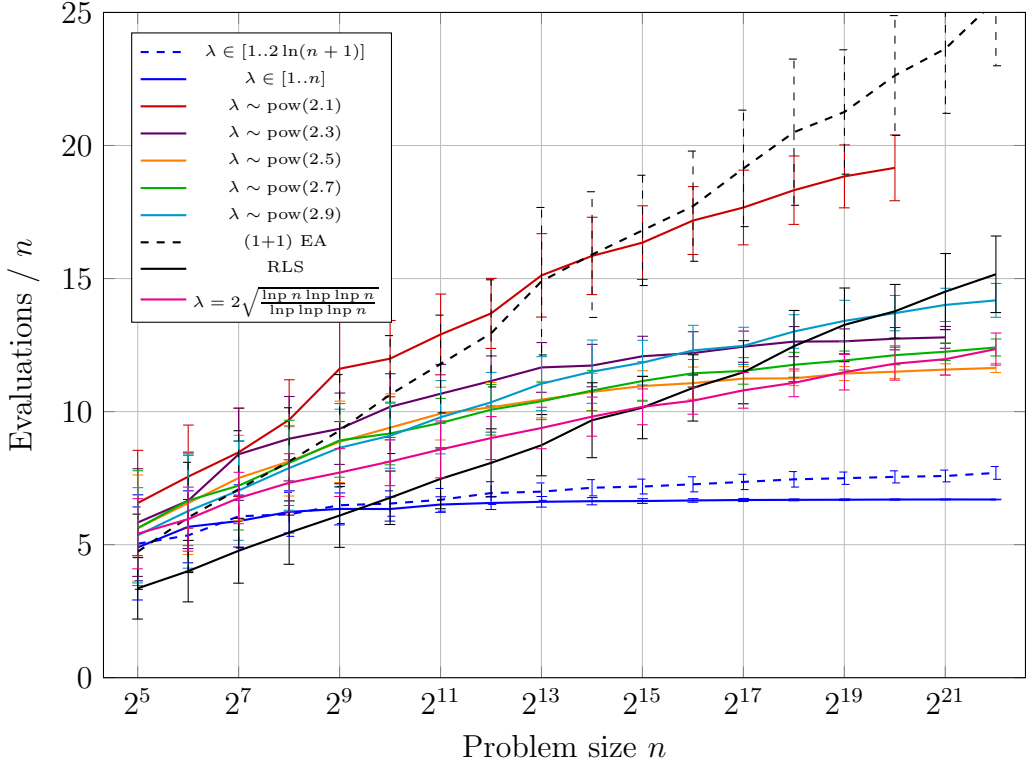

In Figure~\ref{exp:onemax} we show the results of the runs on the \onemax function. If we do not consider $\beta = 2.1$,
which turns out to be too small (and therefore gives a too large expected value of $\lambda$), then all versions of the fast \ollea start outperforming the \oea
already at population size $n = 2^{10}$ and then outperform RLS at $n = 2^{20}$ or earlier. Recalling the discussion after the proof of Theorem~\ref{thm:fitness-evaluations}
we note that our estimate of the leading constant in the runtime was overly pessimistic, otherwise we would have no chance to outperform RLS on these problem sizes.

The one-fifth rule shows a much better performance and yields a runtime of the \ollea which is very close to linear already from $n = 2^{10}$ on
for both linear and logarithmic upper bounds on $\lambda$. The plots for the heavy-tailed choice of $\lambda$ do not look horizontal,
but they show a strongly marked tendency that they will do so at larger population sizes. The runtimes for all $\beta$ except $\beta = 2.1$ are quite well concentrated,
as well as the runtimes of the \ollea with the one-fifth rule, in contrast to the runtimes of the \oea and RLS.
We have no results for $\beta = 2.1$ for population sizes $n \ge 2^{21}$ and for $\beta = 2.3$ for $n \ge 2^{22}$,
since they were too expensive (in terms of computational resources) and most likely not too insightful. 

Figure~\ref{exp:onemax} also features the runtime plot of an asymptotically optimal static choice for $\lambda$.
It has been proven in~\cite{DoerrD18} that the theoretically asymptotically optimal static choice is $\lambda = \Theta(\sqrt{\frac{\ln(n)\ln\ln(n)}{\ln\ln\ln(n)}})$.
By using $\lnp(n) := \ln(n+1)$ instead to avoid issues with logarithms of too small values,
and by fitting the outer constant factor using auxiliary experiments with fixed $\lambda \in [2..12]$,
we have found that $\lambda = 2\sqrt{\frac{\lnp(n)\lnp\lnp(n)}{\lnp\lnp\lnp(n)}}$
approximates the optimal choices quite well, so we have used the version of the \ollea with this choice in our plots.
We also see that with the choice of $\beta = 2.5$ the fast \ollea outperforms the statically optimal parameter choice at problem sizes $n \ge 2^{20}$.

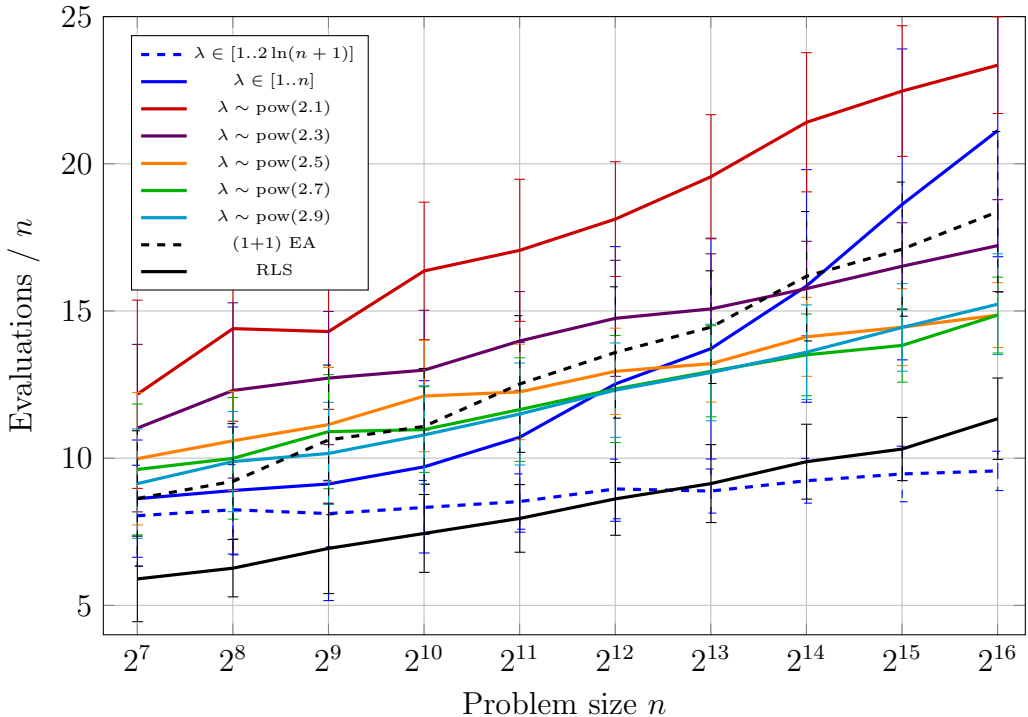
\begin{figure}[!t]
\begin{tikzpicture}
\begin{axis}[width=\linewidth, height=0.47\textheight, xmode=log, log base x=2, grid=major,
             xlabel={Problem size $n$}, ylabel={Evaluations / $n$},
             legend pos=north west, cycle list name=myplotcycle,
			 xmin = 100, xmax = 80000, ymin = 4, ymax = 25,
             every axis plot/.append style={very thick}]
			 \addplot plot [error bars/.cd, y dir=both, y explicit] coordinates {(128,8.044)+-(0,1.715)(256,8.246)+-(0,1.533)(512,8.117)+-(0,1.128)(1024,8.326)+-(0,0.9179)(2048,8.526)+-(0,0.9402)(4096,8.954)+-(0,1.013)(8192,8.882)+-(0,0.7489)(16384,9.233)+-(0,0.762)(32768,9.463)+-(0,0.9435)(65536,9.567)+-(0,0.6671)};
			 \addlegendentry{\thefontsize $\lambda\in [1..2\ln (n + 1)]$};
			 \addplot plot [error bars/.cd, y dir=both, y explicit] coordinates {(128,8.624)+-(0,1.992)(256,8.9)+-(0,2.159)(512,9.122)+-(0,3.96)(1024,9.705)+-(0,2.927)(2048,10.71)+-(0,3.225)(4096,12.52)+-(0,4.665)(8192,13.72)+-(0,3.752)(16384,15.85)+-(0,3.951)(32768,18.62)+-(0,5.281)(65536,21.13)+-(0,4.287)};
			 \addlegendentry{\thefontsize $\lambda\in [1..n]$};
			 \addplot plot [error bars/.cd, y dir=both, y explicit] coordinates {(128,12.17)+-(0,3.198)(256,14.4)+-(0,3.146)(512,14.3)+-(0,2.643)(1024,16.36)+-(0,2.335)(2048,17.06)+-(0,2.412)(4096,18.12)+-(0,1.949)(8192,19.56)+-(0,2.108)(16384,21.41)+-(0,2.364)(32768,22.47)+-(0,2.22)(65536,23.35)+-(0,1.644)};
			 \addlegendentry{\thefontsize $\lambda\sim\text{pow}(2.1)$};
			 \addplot plot [error bars/.cd, y dir=both, y explicit] coordinates {(128,11.02)+-(0,2.845)(256,12.3)+-(0,2.977)(512,12.72)+-(0,2.262)(1024,12.99)+-(0,2.034)(2048,13.98)+-(0,1.678)(4096,14.75)+-(0,1.971)(8192,15.07)+-(0,1.871)(16384,15.76)+-(0,1.606)(32768,16.52)+-(0,1.477)(65536,17.22)+-(0,1.561)};
			 \addlegendentry{\thefontsize $\lambda\sim\text{pow}(2.3)$};
			 \addplot plot [error bars/.cd, y dir=both, y explicit] coordinates {(128,9.982)+-(0,2.25)(256,10.59)+-(0,1.681)(512,11.14)+-(0,1.939)(1024,12.11)+-(0,1.891)(2048,12.25)+-(0,1.623)(4096,12.95)+-(0,1.465)(8192,13.21)+-(0,1.307)(16384,14.12)+-(0,1.341)(32768,14.45)+-(0,1.305)(65536,14.86)+-(0,1.102)};
			 \addlegendentry{\thefontsize $\lambda\sim\text{pow}(2.5)$};
			 \addplot plot [error bars/.cd, y dir=both, y explicit] coordinates {(128,9.616)+-(0,2.222)(256,9.993)+-(0,2.066)(512,10.9)+-(0,1.938)(1024,10.97)+-(0,1.454)(2048,11.65)+-(0,1.76)(4096,12.35)+-(0,1.819)(8192,12.95)+-(0,1.546)(16384,13.51)+-(0,1.388)(32768,13.83)+-(0,1.25)(65536,14.86)+-(0,1.288)};
			 \addlegendentry{\thefontsize $\lambda\sim\text{pow}(2.7)$};
			 \addplot plot [error bars/.cd, y dir=both, y explicit] coordinates {(128,9.135)+-(0,1.858)(256,9.884)+-(0,1.703)(512,10.16)+-(0,1.736)(1024,10.79)+-(0,1.673)(2048,11.5)+-(0,1.732)(4096,12.31)+-(0,1.603)(8192,12.91)+-(0,1.64)(16384,13.6)+-(0,1.607)(32768,14.44)+-(0,1.487)(65536,15.23)+-(0,1.712)};
			 \addlegendentry{\thefontsize $\lambda\sim\text{pow}(2.9)$};
			 \addplot plot [error bars/.cd, y dir=both, y explicit] coordinates {(128,8.631)+-(0,2.299)(256,9.211)+-(0,1.961)(512,10.62)+-(0,2.542)(1024,11.08)+-(0,1.967)(2048,12.52)+-(0,2.322)(4096,13.59)+-(0,2.231)(8192,14.45)+-(0,1.914)(16384,16.18)+-(0,2.198)(32768,17.1)+-(0,2.278)(65536,18.37)+-(0,2.727)};
			 \addlegendentry{\thefontsize (1+1) EA};
			 \addplot plot [error bars/.cd, y dir=both, y explicit] coordinates {(128,5.897)+-(0,1.453)(256,6.262)+-(0,0.9742)(512,6.934)+-(0,1.535)(1024,7.443)+-(0,1.322)(2048,7.953)+-(0,1.149)(4096,8.616)+-(0,1.236)(8192,9.136)+-(0,1.324)(16384,9.88)+-(0,1.272)(32768,10.31)+-(0,1.073)(65536,11.34)+-(0,1.382)};
			 \addlegendentry{\thefontsize RLS};			 
\end{axis}
\end{tikzpicture}
\caption{Mean runtimes and their standard deviation of different algorithms on \maxsat instances with $4n\ln(n)$ clauses. By $\lambda \in [1..u]$ we denote the self-adjusting parameter choice via the one-fifth rule in the interval $[1..u]$. The indicated interval for each value $X$ is $[E[X] - \sigma(X), E[X] + \sigma(X)]$, where $\sigma(X)$ is the standard deviation of $X$. By $pow(x)$ we denote the power-law distribution with parameters $u = n$ and $\beta = x$.}
\label{exp:maxsat}
\end{figure}

\subsection{{Runtimes on \maxsat}{}}

Figure~\ref{exp:maxsat} shows the results of the experiments on the \maxsat problem. As previously shown in~\cite{BuzdalovD17}, large values of $\lambda$ can be harmful.
For this reason, the \ollea with the unbounded one-fifth rule is outperformed already by the simple \oea.
The authors of~\cite{BuzdalovD17} proposed to limit the value which $\lambda$ can take by $2\ln(n + 1)$, which greatly improved the performance up to the point that RLS was outperformed on this problem.

As we see in Figure~\ref{exp:maxsat}, the fast \ollea is quite efficient even without an upper limit on $\lambda$. Except for the case $\beta = 2.1$, we managed to outperform the \oea and the self-adjusting \ollea without an upper limit on $\lambda$. Nevertheless, RLS and the self-adjusting \ollea with a logarithmic cap on $\lambda$ remained faster. 

The runtimes of all algorithms appear super-linear in the plots.

\begin{figure}[!t]
\newcommand{\htcaption}[1]{$\beta=#1, u=2\ln(n+1)$}
\begin{tikzpicture}
\begin{axis}[width=\linewidth, height=0.5\textheight, xmode=log, log base x=2, grid=major, ymin=4,
             xlabel={Problem size $n$}, ylabel={Evaluations / $n$},
             legend pos=north west, cycle list name=myplotcycle2,
			 xmin = 100, xmax = 780000,
             every axis plot/.append style={very thick}]
\addplot plot [error bars/.cd, y dir=both, y explicit] coordinates {(128,7.68)+-(0,1.407)(256,7.838)+-(0,1.307)(512,8.039)+-(0,1.027)(1024,8.097)+-(0,0.9763)(2048,8.228)+-(0,0.7573)(4096,8.706)+-(0,0.9018)(8192,8.828)+-(0,0.7874)(16384,8.919)+-(0,0.735)(32768,9.203)+-(0,0.6443)(65536,9.393)+-(0,0.6517)(131072,9.563)+-(0,0.6915)(262144,9.758)+-(0,0.7171)(524288,9.973)+-(0,0.6132)};
\addlegendentry{\thefontsize $\lambda \in [1..2\ln(n+1)]$};
\addplot plot [error bars/.cd, y dir=both, y explicit] coordinates {(128,8.787)+-(0,1.817)(256,9.762)+-(0,1.641)(512,9.868)+-(0,1.635)(1024,10.29)+-(0,1.534)(2048,10.76)+-(0,1.394)(4096,11.29)+-(0,1.426)(8192,11.69)+-(0,1.23)(16384,12.29)+-(0,1.078)(32768,12.67)+-(0,0.9932)(65536,13.25)+-(0,1.219)(131072,13.69)+-(0,1.191)(262144,13.95)+-(0,1.106)(524288,14.31)+-(0,1.236)};
\addlegendentry{\thefontsize \htcaption{2.1}};
\addplot plot [error bars/.cd, y dir=both, y explicit] coordinates {(128,9.155)+-(0,2.018)(256,9.256)+-(0,1.904)(512,9.837)+-(0,1.834)(1024,10.16)+-(0,1.508)(2048,10.83)+-(0,1.467)(4096,11.31)+-(0,1.185)(8192,11.99)+-(0,1.36)(16384,12.36)+-(0,1.225)(32768,13)+-(0,1.353)(65536,13.38)+-(0,1.267)(131072,14.2)+-(0,1.331)(262144,14.28)+-(0,1.023)(524288,14.83)+-(0,1.052)};
\addlegendentry{\thefontsize \htcaption{2.3}};
\addplot plot [error bars/.cd, y dir=both, y explicit] coordinates {(128,8.947)+-(0,2.44)(256,9.096)+-(0,1.688)(512,9.726)+-(0,1.806)(1024,10.51)+-(0,1.84)(2048,11.09)+-(0,1.695)(4096,11.7)+-(0,1.599)(8192,12.64)+-(0,1.582)(16384,12.79)+-(0,1.255)(32768,13.77)+-(0,1.579)(65536,14.06)+-(0,1.307)(131072,14.36)+-(0,1.301)(262144,14.96)+-(0,1.311)(524288,15.72)+-(0,1.399)};
\addlegendentry{\thefontsize \htcaption{2.5}};
\addplot plot [error bars/.cd, y dir=both, y explicit] coordinates {(128,8.614)+-(0,2.104)(256,9.242)+-(0,1.605)(512,10.11)+-(0,1.765)(1024,10.41)+-(0,1.561)(2048,11.4)+-(0,1.761)(4096,11.95)+-(0,1.723)(8192,12.63)+-(0,1.725)(16384,13.23)+-(0,1.36)(32768,13.91)+-(0,1.454)(65536,14.38)+-(0,1.416)(131072,15.26)+-(0,1.723)(262144,15.91)+-(0,1.463)(524288,16.57)+-(0,1.47)};
\addlegendentry{\thefontsize \htcaption{2.7}};
\addplot plot [error bars/.cd, y dir=both, y explicit] coordinates {(128,8.842)+-(0,2.264)(256,9.363)+-(0,2.011)(512,10.31)+-(0,1.892)(1024,10.73)+-(0,1.711)(2048,11.37)+-(0,1.79)(4096,12.42)+-(0,1.709)(8192,12.85)+-(0,1.57)(16384,13.55)+-(0,1.558)(32768,14.59)+-(0,1.759)(65536,15.37)+-(0,1.675)(131072,15.98)+-(0,1.598)(262144,16.72)+-(0,1.431)(524288,17.23)+-(0,1.635)};
\addlegendentry{\thefontsize \htcaption{2.9}};
\addplot plot [error bars/.cd, y dir=both, y explicit] coordinates {(128,8.397)+-(0,2.415)(256,9.283)+-(0,1.975)(512,10.2)+-(0,2.164)(1024,11.41)+-(0,2.099)(2048,12.62)+-(0,2.588)(4096,13.32)+-(0,2.084)(8192,14.75)+-(0,2.031)(16384,15.52)+-(0,1.778)(32768,16.92)+-(0,1.963)(65536,18.33)+-(0,2.167)(131072,19.27)+-(0,2.415)(262144,20.57)+-(0,2.245)(524288,21.34)+-(0,2.154)};
\addlegendentry{\thefontsize (1+1) EA};
\addplot plot [error bars/.cd, y dir=both, y explicit] coordinates {(128,5.716)+-(0,1.311)(256,6.336)+-(0,1.288)(512,6.844)+-(0,1.321)(1024,7.466)+-(0,1.31)(2048,7.86)+-(0,1.345)(4096,8.517)+-(0,1.289)(8192,9.118)+-(0,1.225)(16384,9.845)+-(0,1.377)(32768,10.58)+-(0,1.253)(65536,11.14)+-(0,1.132)(131072,12.11)+-(0,1.784)(262144,12.42)+-(0,1.099)(524288,13.11)+-(0,1.119)};
\addlegendentry{\thefontsize RLS};
\end{axis}
\end{tikzpicture}
\caption{Mean runtimes and their standard deviation of different algorithms on \maxsat instances with $4n\ln(n)$ clauses with logarithmically capped population sizes.
         By $\lambda \in [1..u]$ we denote the self-adjusting parameter choice via the one-fifth rule in the interval $[1..u]$.
         The indicated interval for each value $X$ is $[E[X] - \sigma(X), E[X] + \sigma(X)]$, where $\sigma(X)$ is the standard deviation of $X$.}
\label{exp:maxsat:capped}
\end{figure}

\subsection{Effects of Capping for \maxsat}

Since apparently large values of $\lambda$ are not helpful when optimizing \maxsat instances (due to the weaker fitness-distance correlation), we conducted some experiments with the fast \ollea choosing $\lambda$ from a power-law distribution on a smaller range $[1..u]$ of values. Based on the previous experience, we started with an upper limit of $u=2 \ln (n+1)$. These results are presented in Figure~\ref{exp:maxsat:capped}.

Using this upper limit reduced the computational burden associated with heavy-tailed distributions and allowed us to regard problem sizes up to $2^{19}$. The upper limit also led a better performance in terms of fitness evaluations. When comparing Figure~\ref{exp:maxsat} and Figure~\ref{exp:maxsat:capped} around the problem size $n=2^{16}$, we see that for $\beta \in \{2.1, 2.3\}$ a significant speed-up was obtained, whereas for $2.5 \le \beta \le 2.9$ the differences of the corresponding mean running times are negligible. This is not surprising given that for smaller values of~$\beta$, the inefficient high values of $\lambda$ are sampled more often. Interestingly, in combination with the upper limit small values of $\beta$ gave the best performance. This suggests that it is important to use moderately large values of $\lambda$ often and that only too large values lead to efficiency losses.

\begin{figure}[!t]
\begin{tikzpicture}
\begin{axis}[width=\linewidth, height=0.5\textheight, xmode=log, log base x=2, grid=major,
             xlabel={Distribution upper limit $u$}, ylabel={Evaluations / $n$},
             xmin=3, xmax=11500, ymin=12.5, ymax=18.5,
             legend pos=north east, cycle list name=myplotcycle3,
             every axis plot/.append style={very thick}]
\addplot plot [error bars/.cd, y dir=both, y explicit] coordinates {(4,18.37)+-(0,2.727)(8,18.37)+-(0,2.727)(16,18.37)+-(0,2.727)(32,18.37)+-(0,2.727)(64,18.37)+-(0,2.727)(128,18.37)+-(0,2.727)(256,18.37)+-(0,2.727)(512,18.37)+-(0,2.727)(1024,18.37)+-(0,2.727)(2048,18.37)+-(0,2.727)(4096,18.37)+-(0,2.727)(8192,18.37)+-(0,2.727)};
\addlegendentry{\thefontsize $(1+1)$ EA (reference)};
\addplot plot [error bars/.cd, y dir=both, y explicit] coordinates {(4,17.14)+-(0,1.968)(8,14.71)+-(0,1.598)(16,13.58)+-(0,1.461)(32,13.37)+-(0,0.9918)(64,13.78)+-(0,1.097)(128,14.65)+-(0,1.036)(256,15.82)+-(0,1.05)};
\addlegendentry{\thefontsize $\beta=2.1$};
\addplot plot [error bars/.cd, y dir=both, y explicit] coordinates {(4,16.92)+-(0,1.552)(8,15.07)+-(0,1.464)(16,13.78)+-(0,1.445)(32,13.28)+-(0,0.9151)(64,13.6)+-(0,0.9406)(128,14.07)+-(0,1.087)(256,14.81)+-(0,1.137)(512,15.08)+-(0,1.044)(1024,15.36)+-(0,1.152)(2048,15.95)+-(0,1.171)(4096,16.03)+-(0,1.2)(8192,16.56)+-(0,1.278)};
\addlegendentry{\thefontsize $\beta=2.3$};
\addplot plot [error bars/.cd, y dir=both, y explicit] coordinates {(4,17.15)+-(0,2.132)(8,15.61)+-(0,1.874)(16,14.37)+-(0,1.936)(32,13.75)+-(0,1.248)(64,13.97)+-(0,1.397)(128,14.27)+-(0,1.244)(256,14.26)+-(0,1.209)(512,14.4)+-(0,1.45)(1024,14.65)+-(0,1.114)(2048,14.88)+-(0,1.228)(4096,14.74)+-(0,1.44)(8192,14.98)+-(0,1.451)};
\addlegendentry{\thefontsize $\beta=2.5$};
\addplot plot [error bars/.cd, y dir=both, y explicit] coordinates {(4,17.39)+-(0,1.914)(8,15.87)+-(0,1.708)(16,14.67)+-(0,1.4)(32,14.67)+-(0,1.473)(64,14.22)+-(0,1.149)(128,14.51)+-(0,1.269)(256,14.4)+-(0,1.279)(512,14.65)+-(0,1.224)(1024,14.42)+-(0,1.378)(2048,15.13)+-(0,1.497)(4096,14.79)+-(0,1.387)(8192,14.81)+-(0,1.484)};
\addlegendentry{\thefontsize $\beta=2.7$};
\addplot plot [error bars/.cd, y dir=both, y explicit] coordinates {(4,17.37)+-(0,2.239)(8,16.23)+-(0,2.031)(16,15.56)+-(0,1.511)(32,14.79)+-(0,1.4)(64,15.08)+-(0,1.391)(128,14.89)+-(0,1.313)(256,14.84)+-(0,1.44)(512,14.83)+-(0,1.27)(1024,15.4)+-(0,1.553)(2048,15.06)+-(0,1.382)(4096,15)+-(0,1.716)(8192,14.96)+-(0,1.58)};
\addlegendentry{\thefontsize $\beta=2.9$};
\end{axis}
\end{tikzpicture}
\caption{Mean runtimes and their standard deviation of different algorithms on \maxsat instances with $4n\ln(n)$ clauses for different capping values. Problem size is $n=2^{16}$.
         The indicated interval for each value $X$ is $[E[X] - \sigma(X), E[X] + \sigma(X)]$, where $\sigma(X)$ is the standard deviation of $X$.}
\label{exp:maxsat:varcap}
\end{figure}
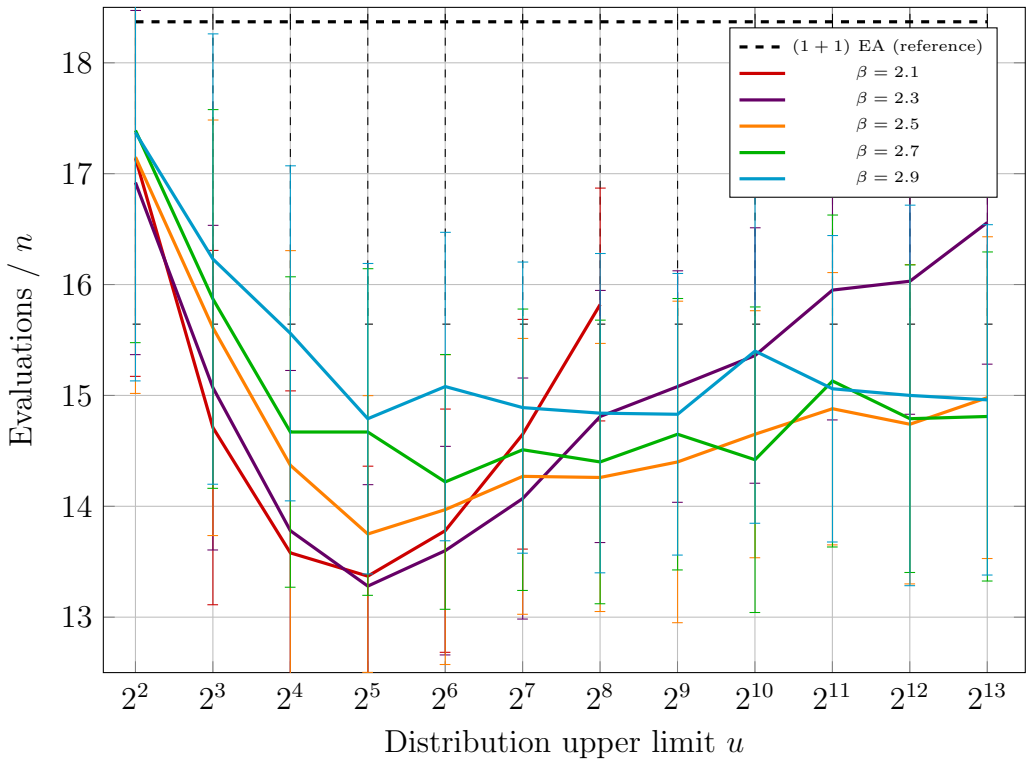

To investigate the effect of the particular choice of the upper limit $u$ on the running time for various values of $\beta$,
we performed additional experiments where the problem size was fixed to $n=2^{16}$, but the upper limits were varying.
Figure~\ref{exp:maxsat:varcap} presents these results, where $u$ was taken from the set $u \in \{ 2^2, 2^3, \ldots, 2^{13}\}$.
Note that high values of $u$ again prevented us from choosing a higher problem size. We also plot for reference the performance
of the $(1+1)$~EA on the same problem size.

The plots in Figure~\ref{exp:maxsat:varcap} indicate that for $2.1 \le \beta \le 2.5$ the dependency on the upper limit has a clear optimal value:
Too small values of $u$ prevent the \ollea from choosing the more efficient mid-size values of $\lambda$, too high values of $u$ lead to sampling too large values of $\lambda$ too often, which have little chance of making progress and at the same time are very costly.
It can be seen, however, that already for $\beta = 2.5$ the subsequent increase of the running time is not too pronounced.
Higher values of $\beta$ tend to a monotonic behavior, up to the deviations from the mean running time.
This basically indicates that the sensitivity to the upper limit of the distribution is not large even in practice.

\subsection{Summary of Experimental Results}

Summing up, from the results of the experiments we conclude the following three points.

\begin{itemize}
	\item The fast \ollea performs generally well, often beating the classic mutation based algorithms. On \onemax, the self-adjusting \ollea both without and with an upper limit of $u = 2 \ln(n+1)$ are superior, on \maxsat only the version with upper limit and RLS are superior.
	\item The fast \ollea can easily be used as a parameterless algorithm and this is what we suggest. We note that the \ollea with the asymptotically optimal static parameter setting could not beat the fast \ollea on \onemax. The self-adjusting \ollea without an upper limit was superior on \onemax, but significantly inferior on \maxsat. The version with upper limit $u = 2 \ln(n+1)$ was superior on both \onemax and \maxsat. We still do not want to advertise this approach as clearly such limits are problem-specific and non-trivial to find. The logarithmic limit for \maxsat is based on a substantial mathematical analysis~\cite{BuzdalovD17} of these particular \maxsat instances. For other problems, such a limit may be detrimental, e.g., it may be hard to leave a local optimum with a large basin of attraction.
	\item The choice of $\beta$ does not play a big role as long as it is not too close to the borders of the interval $(2, 3)$. Taking $\beta$ between $2.5$ and $2.7$ might be a good general recommendation.
\end{itemize}

\section{Conclusion}

In this first runtime analysis of a crossover-based algorithm using the fast mutation operator, we observed that the fast mutation operator not only can relieve the algorithm designer from the task of choosing a suitable mutation rate, but it can also lead to runtimes asymptotically better than any static choice of the mutation rate. 

Different from previous works, where any power-law exponent greater than one could be used, our work requires that $\beta$ is between $2$ and $3$. 
We note, however, that the power-law distributions are often used with exponents in the open interval $(2,3)$ and this for good reason. In this regime, we have a heavy tail (as opposed for $\beta > 3$), but we still have a constant expectation (as opposed to $\beta < 2$). Since the complexity of a single iteration is $\Theta(\lambda)$, having a constant expectation $E[\lambda]$ is very natural. 

On the technical side, our work shows that algorithms with a heavy-tailed number of offspring can be much easier to analyze than those with a self-adjusting number of offspring (such as the self-adjusting \ollea~\cite{DoerrD18}), since Wald's equation allows to estimate the expected runtime as the product of the expected number of iterations and the expected number of offspring generated in one iteration. 

The natural question arising from this work is for which other algorithms and problems such a speed-up can be obtained. Natural candidates are other crossover-based algorithms or algorithms in which dynamic parameter choices could obtain a speed-up over static choices. We note that after this research was conducted, it was found that the \ollea with two of its parameters chosen independently from heavy-tailed distributions has a good performance on jump functions~\cite{AntipovD20ppsn}. The performance is slightly inferior to the one with optimal static parameters~\cite{AntipovDK20}, however these were non-trivial to find as they deviated significantly from the previous recommendations.

\section*{Acknowledgements}
This work was supported by a public grant as part of the Investissement d'avenir project, reference ANR-11-LABX-0056-LMH, LabEx LMH and by RFBR and CNRS, project number 20-51-15009.


\newcommand{\etalchar}[1]{$^{#1}$}

\newpage
\appendix
\section*{Appendix: Computation of Table~\ref{tbl:progress}}

In this appendix we compute all estimates of the true progress probability $p_{d(x)}$ shown in Table~\ref{tbl:progress}.
We use the same expression for estimating $p_{d(x)}$ as in Lemma~\ref{lem:progress-global}, that by Lemma~\ref{lem:progress-lambda} is,

\begin{align*} 
	p_{d(x)} &= \sum_{\lambda = 1}^{u} C_{\beta, u} \lambda^{-\beta} p_{d(x)}(\lambda) \\
	&\ge \begin{cases}
		C_{\beta, u} C \frac{d(x)}{n} \sum_{\lambda = 1}^{u}  \lambda^{2-\beta}, &\text{ if } u \le \sqrt{\frac{n}{d(x)}}, \\
		C_{\beta, u} C \frac{d(x)}{n} \sum_{\lambda = 1}^{\lfloor \sqrt{\frac{n}{d(x)}} \rfloor} \lambda^{2-\beta} + C_{\beta, u} C \sum_{\lambda = \lfloor \sqrt{\frac{n}{d(x)}} \rfloor + 1}^u \lambda^{-\beta}, &\text{ else,}
	\end{cases}
\end{align*} 

where $C$ is some constant. 
Recall that by Lemma~\ref{lem:sum-upper} we have
\begin{itemize}
	\item if $\beta < 0$, then $C_{\beta, u} \ge  u^{\beta - 1} \frac{1 - \beta}{2 - \beta}$,
	\item if $\beta \in [0, 1)$, then $C_{\beta, u} \ge u^{\beta - 1} (1 - \beta)$,
	\item if $\beta = 1$, then $C_{\beta, u} \ge \frac{1}{\ln(u) + 1}$, and
	\item if $\beta > 1$, then $C_{\beta, u} \ge \frac{\beta - 1}{\beta}$.
\end{itemize} 

Now we consider 11 cases depending on $\beta$ and $u$. We start with the cases when $u \le \sqrt{\frac{n}{d(x)}}$ and therefore estimate $p_{d(x)}$ as 
\begin{align*}
	p_{d(x)} &\ge
		C_{\beta, u} C \frac{d(x)}{n} \sum_{\lambda = 1}^{u}  \lambda^{2-\beta}.
\end{align*}

\textbf{Case 1: $\beta < 0$, $u \le \sqrt{\frac{n}{d(x)}}$.}

By Lemma~\ref{lem:sum-lower} we have
\begin{align*} 
	p_{d(x)} &\ge  C C_{\beta, u} \frac{d(x)}{n} \sum_{i = 1}^u \lambda^{2 - \beta} \\
			 &\ge C \cdot u^{\beta - 1} \frac{1 - \beta}{2 - \beta} \cdot \frac{d(x)}{n} \cdot \frac{u^{3 - \beta} - 1}{3 - \beta} = \Omega\left(\frac{d(x)u^2}{n}\right).
\end{align*} 

\textbf{Case 2: $\beta \in [0, 1)$, $u \le \sqrt{\frac{n}{d(x)}}$.}

By Lemma~\ref{lem:sum-lower} we have
\begin{align*} 
	p_{d(x)} &\ge  C C_{\beta, u} \frac{d(x)}{n} \sum_{i = 1}^u \lambda^{2 - \beta} \\
			 &\ge C \cdot u^{\beta - 1} (1 - \beta) \cdot \frac{d(x)}{n}  \cdot \frac{u^{3 - \beta} - 1}{3 - \beta} = \Omega\left(\frac{d(x)u^2}{n}\right),
\end{align*} 
which is the same as in Case 1.

\textbf{Case 3: $\beta = 1$, $u \le \sqrt{\frac{n}{d(x)}}$.}

In this case we have
\begin{align*} 
	p_{d(x)} &\ge  C C_{\beta, u} \frac{d(x)}{n} \sum_{i = 1}^u \lambda \\
			 &\ge C \cdot \frac{1 }{\ln(u) + 1} \cdot \frac{d(x)}{n} \cdot \frac{u(u + 1)}{2} = \Omega\left(\frac{d(x)u^2}{n\log(u)}\right).
\end{align*} 

\textbf{Case 4: $\beta \in (1, 3)$, $u \le \sqrt{\frac{n}{d(x)}}$.}

By Lemma~\ref{lem:sum-lower} we have
\begin{align*} 
	p_{d(x)} &\ge  C C_{\beta, u} \frac{d(x)}{n} \sum_{i = 1}^u \lambda^{2 - \beta} \\
			 &\ge C \cdot \frac{\beta - 1}{\beta} \cdot \frac{d(x)}{n} \cdot \frac{u^{3 - \beta} - 1}{3 - \beta} = \Omega\left(\frac{d(x)u^{3 - \beta}}{n}\right).
\end{align*} 

\textbf{Case 5: $\beta = 3$, $u \le \sqrt{\frac{n}{d(x)}}$.}

By Lemma~\ref{lem:sum-lower} we have
\begin{align*} 
	p_{d(x)} &\ge  C C_{\beta, u} \frac{d(x)}{n} \sum_{i = 1}^u \lambda^{-1} \\
			 &\ge C \cdot \frac{2}{3} \cdot \frac{d(x)}{n} \cdot \ln(u) = \Omega\left(\frac{d(x)\log(u)}{n}\right).
\end{align*} 

\textbf{Case 6: $\beta > 3$, $u \le \sqrt{\frac{n}{d(x)}}$.}

We have
\begin{align*} 
	p_{d(x)} &\ge  C C_{\beta, u} \frac{d(x)}{n} \sum_{i = 1}^u \lambda^{2 - \beta} \\
			 &\ge C \cdot \frac{\beta - 1}{\beta} \cdot \frac{d(x)}{n} \cdot 1 = \Omega\left(\frac{d(x)}{n}\right).
\end{align*}

In the following cases we consider $u > \sqrt{\frac{n}{d(x)}}$, hence we estimate $p_{d(x)}$ as

\begin{align*} 
	p_{d(x)} &\ge
		C_{\beta, u} C \frac{d(x)}{n} \sum_{\lambda = 1}^{\lfloor \sqrt{\frac{n}{d(x)}} \rfloor} \lambda^{2-\beta} + C_{\beta, u} C \sum_{\lambda = \lfloor \sqrt{\frac{n}{d(x)}} \rfloor + 1}^u \lambda^{-\beta} \\
		&= C C_{\beta, u} \left(\frac{d(x)}{n} \sum_{\lambda = 1}^{\lfloor \sqrt{\frac{n}{d(x)}} \rfloor} \lambda^{2-\beta} +  \sum_{\lambda = \lfloor \sqrt{\frac{n}{d(x)}} \rfloor + 1}^u \lambda^{-\beta}\right).
\end{align*} 

In all cases we first estimate the sums in the brackets and then put it into the inequality.

\textbf{Case 7: $\beta < 1$, $u > \sqrt{\frac{n}{d(x)}}$.}

We consider three sub-cases.
\begin{enumerate}
	\item \textbf{When $u \le 2\sqrt{\frac{n}{d(x)}} + 2$ and $\sqrt{\frac{n}{d(x)}} \le 4$.}
	
	In this case we also have $u \le 2 \cdot 4 + 2 = 10$. Hence,
	\begin{align*}
		\frac{d(x)}{n} \sum_{\lambda = 1}^{\lfloor \sqrt{\frac{n}{d(x)}} \rfloor} \lambda^{2-\beta} \ge \frac{d(x)}{n} \ge \frac{1}{16} \ge \frac {u^{1 - \beta}}{16 \cdot 10^{1 - \beta}}.
	\end{align*}
	\item \textbf{When $u \le 2\sqrt{\frac{n}{d(x)}} + 2$ and $\sqrt{\frac{n}{d(x)}} > 4$.}
	
	In this case we have $\sqrt{\frac{n}{d(x)}} \ge \frac{u}{2} - 1$. We also have that $\sqrt{\frac{n}{d(x)}}^{3 - \beta} \ge 4^{3 - \beta} > 2^{4 - \beta}$ (therefore, $(\sqrt{\frac{n}{d(x)}} / 2)^{3 - \beta} > 2$). Hence, by Lemma~\ref{lem:sum-lower} we have 
	\begin{align*}
		\frac{d(x)}{n} \sum_{\lambda = 1}^{\lfloor \sqrt{\frac{n}{d(x)}} \rfloor} \lambda^{2-\beta} &\ge \frac{d(x)}{n} \sum_{\lambda = 1}^{\lceil \sqrt{\frac{n}{d(x)}} - 1 \rceil} \lambda^{2-\beta} \ge \frac{d(x)}{n} \cdot \frac{\left(\sqrt{\frac{n}{d(x)}} - 1\right)^{3 - \beta} - 1}{3 - \beta} \\
		&\ge \frac{d(x)}{n} \cdot \frac{\left(\sqrt{\frac{n}{d(x)}} / 2\right)^{3 - \beta} - 1}{3 - \beta} \ge \frac{d(x)}{n} \cdot \frac{\left(\sqrt{\frac{n}{d(x)}} / 2\right)^{3 - \beta}}{2(3 - \beta)} \\
		&\ge \sqrt{\frac{n}{d(x)}}^{1 - \beta} \frac{1}{2^{4 - \beta}(3 - \beta)} \ge \left(\frac{u}{2} - 1\right)^{1 - \beta} \frac{1}{2^{4 - \beta}(3 - \beta)} \\
		&\ge \frac{u^{1 - \beta}}{2^{(6 - 3 \beta)}(3 - \beta)}.
	\end{align*}
	\item \textbf{When $u > 2\sqrt{\frac{n}{d(x)}} + 2$.}
	
	In the same way as in Lemma~\ref{lem:sum-lower} we estimate a sum via a corresponding integral.
	\begin{align*}
		\sum_{\lambda = \lfloor \sqrt{\frac{n}{d(x)}} \rfloor + 1}^u \lambda^{-\beta} & \ge \int_{\lfloor \sqrt{\frac{n}{d(x)}} \rfloor + 1}^u x^{-\beta} dx \ge \int_{u/2}^u x^{-\beta} dx = u^{1 - \beta} \cdot \frac{1 - 2^{\beta - 1}}{1 - \beta}.
	\end{align*}
\end{enumerate}

Summing up all three cases we have that for each $\beta < 1$ there exists a constant $\gamma_1(\beta) = \min\{\frac{1}{16 \cdot 10^{1 - \beta}}, \frac{1}{2^{(6 - 3 \beta)}(3 - \beta)},  \frac{1 - 2^{\beta - 1}}{1 - \beta}\}$ such that 
\begin{align*}
	\frac{d(x)}{n} \sum_{\lambda = 1}^{\lfloor \sqrt{\frac{n}{d(x)}} \rfloor} \lambda^{2-\beta} +  \sum_{\lambda = \lfloor \sqrt{\frac{n}{d(x)}} \rfloor + 1}^u \lambda^{-\beta} \ge \gamma_1(\beta) \cdot u^{1 - \beta}.
\end{align*}

If $\beta < 0$, we have 
\begin{align*}
	p_{d(x)} &\ge C C_{\beta, u} \gamma_1(\beta)  u^{1 - \beta} \ge C u^{\beta - 1} \frac{1 - \beta}{2 - \beta} \gamma_1(\beta)  u^{1 - \beta} = \Omega(1).
\end{align*}
If $\beta \in [0, 1)$, we have
\begin{align*}
	p_{d(x)} &\ge C C_{\beta, u} \gamma_1(\beta)  u^{1 - \beta} \ge C u^{\beta - 1} (1 - \beta) \gamma_1(\beta)  u^{1 - \beta} = \Omega(1).
\end{align*}

\textbf{Case 8: $\beta = 1$, $u > \sqrt{\frac{n}{d(x)}}$.}
We aim at showing that 
\begin{align*}
	p_{d(x)} \ge C \cdot \left(\frac{1}{36\ln(u)} + \frac{\ln(u) - \ln\left(\sqrt{\frac{n}{d(x)}}\right)}{36\ln(u)}\right).
\end{align*}
Note that in this case we do not use asymptotic notation for estimating $p_{d(x)}$ due to having terms of different signs in the bound above (and thus, the leading constants of these terms are important). However note that as long as $u$ is by a constant times greater than $\sqrt{\frac{n}{d(x)}}$, then the first term is dominant, therefore, this bound is $\Omega(\frac{1}{\log(u)})$. If $u$ is at least $\phi \cdot \sqrt{\frac{n}{d(x)}}$ for some super-constant $\phi$, then this bound is $\Omega(\frac{\log(\phi)}{\log(u)})$.

In this case we have $u > \sqrt{\frac{n}{d(x)}} \ge 1$, hence $u \ge 2$. Therefore, by Lemma~\ref{lem:sum-upper} we have 
\begin{align*}
	C_{1,u} \ge \frac{1}{1 + \ln(u)} = \frac{1}{\ln(u)} \cdot \frac{\ln(u)}{1 + \ln(u)} \ge \frac{1}{\ln(u)} \cdot  \frac{\ln(2)}{\ln(2) + 1} > \frac{1}{3\ln(u)}.
\end{align*}

By the formula for a sum of arithmetic progression and estimating the second sum via a corresponding integral in the same way as in Lemma~\ref{lem:sum-lower}, we have
\begin{align*}
	\frac{d(x)}{n} \sum_{\lambda = 1}^{\lfloor \sqrt{\frac{n}{d(x)}} \rfloor} \lambda &+  \sum_{\lambda = \lfloor \sqrt{\frac{n}{d(x)}} \rfloor + 1}^u \lambda^{-1}  \\
	&\ge \frac{d(x)}{n} \cdot \frac{\lfloor \sqrt{\frac{n}{d(x)}} \rfloor \left(\lfloor \sqrt{\frac{n}{d(x)}} \rfloor + 1\right)}{2} + \int_{\lfloor \sqrt{\frac{n}{d(x)}} \rfloor + 1}^u \frac{dx}{x}
\end{align*}
Since for all $x \ge 1$ we have $\frac{\lfloor x \rfloor}{x} \ge \frac{1}{2}$ and $\frac{\lfloor x \rfloor + 1}{x} \ge 1$, we also have 
\begin{align*}
	\frac{\lfloor \sqrt{\frac{n}{d(x)}} \rfloor \left(\lfloor \sqrt{\frac{n}{d(x)}} \rfloor + 1\right)}{2n/d(x)} \ge \frac{1}{4}.
\end{align*}

Now we consider two sub-cases. First, let $u \le e^2 \sqrt{\frac{n}{d(x)}}$. Then we have
\begin{align*}
	p_{d(x)} \ge C C_{1, u} \cdot \frac{1}{4} \ge \frac{C}{12\ln(u)}. 
\end{align*}

Otherwise, if $u > e^2 \sqrt{\frac{n}{d(x)}}$, then we estimate the integral by
\begin{align*}
	\int_{\lfloor \sqrt{\frac{n}{d(x)}} \rfloor + 1}^u \frac{dx}{x} &\ge \int_{\sqrt{\frac{n}{d(x)}} + 1}^u \frac{dx}{x} = \ln(u) - \ln\left(\sqrt{\frac{n}{d(x)}} + 1\right) \\
	&= \ln(u) - \ln\left(\sqrt{\frac{n}{d(x)}}\right) - \ln\left(1 + \sqrt{\frac{d(x)}{n}}\right) \\
	&\ge \ln(u) - \ln\left(\sqrt{\frac{n}{d(x)}}\right) - \sqrt{\frac{d(x)}{n}} \\
	&\ge \frac{\ln(u) - \ln\left(\sqrt{\frac{n}{d(x)}}\right)}{2} + \frac{\ln(u) - \ln\left(\frac{u}{e^2}\right)}{2} - 1 \\
	&= \frac{\ln(u) - \ln\left(\sqrt{\frac{n}{d(x)}}\right)}{2} + 1 - 1.
\end{align*}
Hence, we conclude
\begin{align*}
	p_{d(x)} &\ge C C_{1, u} \left(\frac{1}{4} + \frac{\ln(u) - \ln\left(\sqrt{\frac{n}{d(x)}}\right)}{2}\right) \\
	&\ge C \cdot \frac{1 + 2\left(\ln(u) - \ln\left(\sqrt{\frac{n}{d(x)}}\right)\right)}{12\ln(u)}.
\end{align*}

We unite the two sub-cases with the following lower bound, which holds both for $u \le e^2 \sqrt{\frac{n}{d(x)}}$ and for $u > e^2 \sqrt{\frac{n}{d(x)}}$.
\begin{align*}
	p_{d(x)} &\ge C \cdot \frac{1 + 2\left(\ln(u) - \ln\left(\sqrt{\frac{n}{d(x)}}\right)\right)}{36\ln(u)} \\
			 &\ge C \cdot \frac{1 + \ln(u) - \ln\left(\sqrt{\frac{n}{d(x)}}\right)}{36\ln(u)}.
\end{align*}

\textbf{Case 9: $\beta \in (1, 3)$, $u > \sqrt{\frac{n}{d(x)}}$.}

We consider three sub-cases
\begin{enumerate}
	\item \textbf{When $\beta \le 2$ and $\sqrt{\frac{n}{d(x)}} \le 2$.}
	
	\begin{align*}
		\frac{d(x)}{n} \sum_{\lambda = 1}^{\lfloor \sqrt{\frac{n}{d(x)}} \rfloor} \lambda^{2-\beta} &\ge \frac{d(x)}{n} = \sqrt{\frac{n}{d(x)}}^{1 - \beta} \cdot \sqrt{\frac{n}{d(x)}}^{\beta - 3} \ge \sqrt{\frac{n}{d(x)}}^{1 - \beta} \cdot \left(\frac{1}{2}\right)^{\beta - 3} \\
		&\ge \sqrt{\frac{n}{d(x)}}^{1 - \beta} \cdot \left(\frac{1}{2}\right)^2 = \frac{1}{4}\sqrt{\frac{n}{d(x)}}^{1 - \beta}.
	\end{align*}



	\item \textbf{When $\beta > 2$ and $\lfloor \sqrt{\frac{n}{d(x)}} \rfloor \le 2^{\frac{1}{3 - \beta}}$.} In this case we also have $\sqrt{\frac{n}{d(x)}} \le 2^{\frac{1}{3 - \beta}} + 1$. Hence, we have
	
	\begin{align*}
		\frac{d(x)}{n} \sum_{\lambda = 1}^{\lfloor \sqrt{\frac{n}{d(x)}} \rfloor} \lambda^{2-\beta} &\ge \frac{d(x)}{n} = \sqrt{\frac{n}{d(x)}}^{1 - \beta} \cdot \sqrt{\frac{n}{d(x)}}^{\beta - 3} \\
		&\ge \sqrt{\frac{n}{d(x)}}^{1 - \beta} \cdot \left(2^{\frac{1}{3 - \beta}} + 1\right)^{\beta - 3} \\
		&\ge \sqrt{\frac{n}{d(x)}}^{1 - \beta} \cdot \left(2^{\left(\frac{1}{3 - \beta} + 1\right)}\right)^{\beta - 3} \\
		&= 2^{\beta - 4}\sqrt{\frac{n}{d(x)}}^{1 - \beta} \ge \frac{1}{4}\sqrt{\frac{n}{d(x)}}^{1 - \beta}.
	\end{align*}

	\item \textbf{When $\beta > 2$ and $\lfloor \sqrt{\frac{n}{d(x)}} \rfloor \ge 2^{\frac{1}{3 - \beta}}$ or when $\beta \le 2$ and $\sqrt{\frac{n}{d(x)}} > 2$.}

	In this case we have both $\lfloor \sqrt{\frac{n}{d(x)}} \rfloor^{3 - \beta} \ge 2$ and $\sqrt{\frac{n}{d(x)}} \ge 2$. Hence, by Lemma~\ref{lem:sum-lower} we have
	\begin{align*}
		\frac{d(x)}{n} \sum_{\lambda = 1}^{\lfloor \sqrt{\frac{n}{d(x)}} \rfloor} \lambda^{2-\beta} &\ge \frac{d(x)}{n} \cdot \frac{\lfloor \sqrt{\frac{n}{d(x)}} \rfloor^{3 - \beta} - 1}{3 - \beta} \ge \frac{d(x)}{n} \cdot \frac{\lfloor \sqrt{\frac{n}{d(x)}} \rfloor^{3 - \beta}}{2(3 - \beta)}\\
		&\ge \frac{d(x)}{n} \cdot \frac{\left(\sqrt{\frac{n}{d(x)}} - 1 \right)^{3 - \beta}}{2(3 - \beta)} \ge \frac{d(x)}{n} \cdot \frac{\left(\frac{1}{2}\sqrt{\frac{n}{d(x)}}\right)^{3 - \beta}}{2(3 - \beta)} \\
		&\ge \sqrt{\frac{n}{d(x)}}^{1 - \beta} \frac{1}{2^{4 - \beta}(3 - \beta)}.
	\end{align*}
\end{enumerate}

Summing up all three cases we have that for each $\beta \in (1, 3)$ there exists a constant $\gamma_2(\beta) = \min\{\frac{1}{4},  \frac{1}{2^{4 - \beta}(3 - \beta)}\}$ such that 
\begin{align*}
	\frac{d(x)}{n} \sum_{\lambda = 1}^{\lfloor \sqrt{\frac{n}{d(x)}} \rfloor} \lambda^{2-\beta} \ge \gamma_2(\beta) \cdot \sqrt{\frac{n}{d(x)}}^{1 - \beta}.
\end{align*}

Taking into account that $C_{\beta, u} \ge \frac{\beta - 1}{\beta}$, we obtain
\begin{align*}
	p_{d(x)} \ge C C_{\beta, u} \gamma(\beta) \sqrt{\frac{n}{d(x)}}^{1 - \beta} = \Omega\left(\sqrt{\frac{n}{d(x)}}^{1 - \beta}\right).
\end{align*}

\textbf{Case 10: $\beta = 3$, $u > \sqrt{\frac{n}{d(x)}}$.}

If $\sqrt{\frac{n}{d(x)}} \ge 2$, we compute
\begin{align*}
	p_{d(x)} &\ge C C_{3, u} \frac{d(x)}{n} \sum_{\lambda = 1}^{\lfloor \sqrt{\frac{n}{d(x)}} \rfloor} \lambda^{-1} \ge C \cdot \frac{2}{3} \cdot \frac{d(x)}{n} \ln\left(\lfloor \sqrt{\frac{n}{d(x)}} \rfloor\right) \\
	&= \Omega\left(\frac{\ln\left(\sqrt{\frac{n}{d(x)}}\right)}{n/d(x)}\right).
\end{align*}
Otherwise,
\begin{align*}
	p_{d(x)} &\ge C C_{3, u} \frac{d(x)}{n} = \Omega\left(\frac{1}{n/d(x)}\right).
\end{align*}

Therefore,
\begin{align*}
	p_{d(x)} & = \Omega\left(\frac{\ln\left(\sqrt{\frac{n}{d(x)}}\right) + 1}{n/d(x)}\right).
\end{align*}

\textbf{Case 11: $\beta > 3$, $u > \sqrt{\frac{n}{d(x)}}$.}

In this case we have
\begin{align*} 
	p_{d(x)} &\ge  C C_{\beta, u} \frac{d(x)}{n} \sum_{\lambda = 1}^{\lfloor \sqrt{\frac{n}{d(x)}} \rfloor} \lambda^{2 - \beta} \\
			 &\ge C \cdot \frac{d(x)}{n} \cdot \frac{\beta - 1}{\beta} \cdot 1 = \Omega\left(\frac{d(x)}{n}\right).
\end{align*}

\section*{Appendix: Computation of Table~\ref{tbl:runtimes}}

In this appendix we compute the values of the expected runtime shown in Table~\ref{tbl:runtimes}. We start with computing the expected runtimes in terms of iterations for each value of the algorithm's meta-parameter $\beta$. Recall that $p_d$ is the probability to create a better offspring in one iteration, which is shown in Table~\ref{tbl:progress}. Hence, using the fitness levels argument we can estimate the expected number of iterations before we find the optimum as follows.

\begin{align*}
	E[T_I] \le \sum_{d = 1}^{n} \frac{1}{p_d} = \sum_{d = 1}^{\lfloor \frac{n}{u^2} \rfloor} \frac{1}{p_d} + \sum_{d = \lfloor \frac{n}{u^2} \rfloor + 1}^{n} \frac{1}{p_d}.
\end{align*}

Note that in the first sum we have $u \le \sqrt{\frac{n}{d}}$ (thus, we should use values for $p_d$ from the left column of Table~\ref{tbl:progress}) and in the second sum we have $u > \sqrt{\frac{n}{d}}$ (thus, we should use the estimates from the right column). Note that $p_d = \Omega(f(n, d, u))$ in Table~\ref{tbl:progress} means that for each $\beta$ there exists a constant $\gamma(\beta)$ (independent of $n$, $d$ and $u$) such that $p_d \ge \gamma(\beta) \cdot f(n, d, u)$. We will use this constant in our further computations.

To estimate the expected runtime we consider five cases.

\textbf{Case 1: $\beta < 1$.}

In this case we have 
\begin{align*}
	E[T_I] &\le \sum_{d = 1}^{\lfloor \frac{n}{u^2} \rfloor} \frac{1}{p_d} + \sum_{d = \lfloor \frac{n}{u^2} \rfloor + 1}^{n} \frac{1}{p_d} \\
	&\le \frac{1}{\gamma(\beta)} \left(\sum_{d = 1}^{\lfloor \frac{n}{u^2} \rfloor} \frac{n}{du^2} + \sum_{d = \lfloor \frac{n}{u^2} \rfloor + 1}^{n} 1 \right) \\
	&\le \frac{1}{\gamma(\beta)} \left(\frac{n}{u^2} \left(\ln\lfloor \frac{n}{u^2} \rfloor + 1\right) + n - \lfloor \frac{n}{u^2} \rfloor\right) \\
	&= O\left(\frac{n}{u^2}\ln\left(\frac{n}{u^2}\right) + n\right),
\end{align*}
where we used the estimates for the sums from Lemma~\ref{lem:sum-upper}. Note that when $u \ge \sqrt{\ln(n)}$, we have 
\begin{align*}
	\frac{n}{u^2}\ln\left(\frac{n}{u^2}\right) \le \frac{n}{\ln(n)} \ln(n) = O(n).
\end{align*}
Otherwise, we have
\begin{align*}
	\frac{n}{u^2}\ln\left(\frac{n}{u^2}\right) \ge \frac{n}{\ln(n)} (\ln(n) - \ln\ln(n)) = \Omega(n).
\end{align*}
Therefore, we conclude
\begin{align*}
	E[T_I] = \begin{cases}
		O\left(\frac{n}{u^2}\ln\left(\frac{n}{u^2}\right)\right), &\text{ if } u < \sqrt{\ln(n)}, \\
		O(n), &\text{ if } u \ge \sqrt{\ln(n)}.
	\end{cases}
\end{align*}

\textbf{Case 2: $\beta = 1$.}
In this case we have
\begin{align*}
	E[T_I] &\le \sum_{d = 1}^{\lfloor \frac{n}{u^2} \rfloor} \frac{1}{p_d} + \sum_{d = \lfloor \frac{n}{u^2} \rfloor + 1}^{n} \frac{1}{p_d}\\
	&\le \frac{1}{\gamma(\beta)} \left(\sum_{d = 1}^{\lfloor \frac{n}{u^2} \rfloor} \frac{n\ln(u)}{du^2} + \sum_{d = \lfloor \frac{n}{u^2} \rfloor + 1}^{n} \frac{\ln(u)}{1 + \ln(u) - \ln(\sqrt{\frac{n}{d}})} \right) \\
	&\le \frac{1}{\gamma(\beta)} \left(\frac{n\ln(u)}{u^2} \left(\ln\lfloor \frac{n}{u^2} \rfloor + 1\right) + \sum_{d = \lfloor \frac{n}{u^2} \rfloor + 1}^{\lfloor \frac{n}{u} \rfloor} \ln(u) + \sum_{d = \lfloor \frac{n}{u} \rfloor + 1}^{n} \frac{\ln(u)}{1 + \frac{1}{2}\ln(u)} \right) \\
	&\le \frac{1}{\gamma(\beta)} \left(\frac{n\ln(u)}{u^2} \left(\ln\lfloor \frac{n}{u^2} \rfloor + 1\right) + \frac{n\ln(u)}{u} + n \cdot \frac{\ln(u)}{1 + \frac{1}{2}\ln(u)} \right) \\
	&= O\left(\frac{n\log(u)\log(\frac{n}{u^2})}{u^2} + \frac{n\log(u)}{u} + n\right)
\end{align*}

Note that $\frac{n\ln(u)\ln(\frac{n}{u^2})}{u^2}$ is a decreasing function of $u$ for all $u \ge 1$, which can be shown by considering its derivative (we omit this tedious computation). Hence, if $u < \sqrt{\ln(n)\ln\ln(n)}$, then we have
\begin{align*}
	\frac{n\ln(u)\ln(\frac{n}{u^2})}{u^2} &\ge \frac{n(\ln\ln(n) + \ln\ln\ln(n))(\ln(n) - \ln\ln(n) - \ln\ln\ln(n))}{2\ln(n)\ln\ln(n)} \\
	&= \Omega(n).
\end{align*}
For such $u$ we also have
\begin{align*}
	\frac{n\ln(u)\ln(\frac{n}{u^2})}{u^2} &\ge \frac{n\ln(u)}{u} \cdot \frac{\ln(\frac{n}{u^2})}{u} \\
	&\ge \frac{n\ln(u)}{u} \cdot \frac{(\ln(n) - \ln\ln(n) - \ln\ln\ln(n))}{\sqrt{\ln(n)\ln\ln(n)}} \\
	&= \Omega\left(\frac{n\log(u)}{u}\sqrt{\frac{\log(n)}{\log\log(n)}}\right) = \Omega\left(\frac{n\log(u)}{u}\right).
\end{align*}
If $u \ge \sqrt{\ln(n)\ln\ln(n)}$, we have
\begin{align*}
	\frac{n\ln(u)\ln(\frac{n}{u^2})}{u^2} \le \frac{n\ln\ln(n)(\ln(n) - \ln\ln(n) - \ln\ln\ln(n))}{2\ln(n)\ln\ln(n)} = O(n),
\end{align*}
and we have
\begin{align*}
	\frac{n\ln(u)}{u} \le n = O(n).
\end{align*}

Hence, we conclude
\begin{align*}
	E[T_I] = \begin{cases}
		O\left(\frac{n\log(u)}{u^2}\log\left(\frac{n}{u^2}\right)\right), &\text{ if } u < \sqrt{\ln(n)\ln\ln(n)}, \\
		O(n), &\text{ if } u \ge \sqrt{\ln(n)\ln\ln(n)}.
	\end{cases}
\end{align*}

\textbf{Case 3: $\beta \in (1, 3)$.}

In this case we have 
\begin{align*}
	E[T_I] &\le \sum_{d = 1}^{\lfloor \frac{n}{u^2} \rfloor} \frac{1}{p_d} + \sum_{d = \lfloor \frac{n}{u^2} \rfloor + 1}^{n} \frac{1}{p_d}\\
	&\le \frac{1}{\gamma(\beta)} \left(\sum_{d = 1}^{\lfloor \frac{n}{u^2} \rfloor} \frac{n}{du^{3 - \beta}} + \sum_{d = \lfloor \frac{n}{u^2} \rfloor + 1}^{n} \sqrt{\frac{n}{d}}^{\beta - 1} \right) \\
	& \le \frac{1}{\gamma(\beta)} \left(\frac{n}{u^{3 - \beta}}\left(\ln\left(\frac{n}{u^2}\right) + 1\right) + n^{(\beta - 1)/2}\sum_{d = 1}^{n - 1} d^{(1 - \beta)/2} \right) \\
	&\le \frac{1}{\gamma(\beta)} \left(\frac{n}{u^{3 - \beta}}\left(\ln\left(\frac{n}{u^2}\right) + 1\right) + n^{(\beta - 1)/2} \cdot \frac{n^{(3 - \beta)/2} - 1}{(3 - \beta)/2} \right) \\
	&= O\left(\frac{n}{u^{3 - \beta}}\log\left(\frac{n}{u^2}\right) + n\right),
\end{align*}
where we used Lemma~\ref{lem:sum-upper} to estimate the sums.
When $u < (\ln(n))^{1/(3 - \beta)}$, we have 
\begin{align*}
	\frac{n}{u^{3 - \beta}}\ln\left(\frac{n}{u^2}\right) \ge \frac{n(\ln(n) - \frac{2}{3 - \beta}\ln\ln(n))}{\ln(n)} = \Omega(n).
\end{align*}
Otherwise, we have
\begin{align*}
	\frac{n}{u^{3 - \beta}}\ln\left(\frac{n}{u^2}\right) \le \frac{n\ln(n)}{\ln(n)} = n.
\end{align*}

Therefore, we have 
\begin{align*}
	E[T_I] = \begin{cases}
		O\left(\frac{n}{u^{3 - \beta}}\log\left(\frac{n}{u^2}\right)\right), &\text{ if } u < (\ln(n))^{1/(3 - \beta)}, \\
		O(n), &\text{ if } u \ge (\ln(n))^{1/(3 - \beta)}.
	\end{cases}
\end{align*}

\textbf{Case 4: $\beta = 3$.}
We compute
\begin{align*}
	E[T_I] &\le \sum_{d = 1}^{\lfloor \frac{n}{u^2} \rfloor} \frac{1}{p_d} + \sum_{d = \lfloor \frac{n}{u^2} \rfloor + 1}^{n} \frac{1}{p_d}\\
	&\le \frac{1}{\gamma(\beta)} \left(\sum_{d = 1}^{\lfloor \frac{n}{u^2} \rfloor} \frac{n}{d\ln(u)} + \sum_{d = \lfloor \frac{n}{u^2} \rfloor + 1}^{n} \frac{n}{d\left(\ln\left(\frac{n}{d}\right) + 1\right)} \right) \\
	& \le \frac{1}{\gamma(\beta)} \left(\frac{n}{\ln(u)}\left(\ln\left(\frac{n}{u^2}\right) + 1\right) + n + n \sum_{d = \lfloor \frac{n}{u^2} \rfloor + 2}^{n} \frac{1}{d\left(\ln\left(\frac{n}{d}\right) + 1\right)} \right) \\
	& \le \frac{1}{\gamma(\beta)} \left(\frac{n}{\ln(u)}\left(\ln\left(\frac{n}{u^2}\right) + 1\right) + n + n \int_{n/u^2}^n \frac{dx}{x(\ln(n) - \ln(x) + 1)} \right), \\
\end{align*}
where we used the fact that $f(x) = \frac{1}{x(\ln(n) - \ln(x) + 1)}$ is a decreasing function in interval $[1, n]$ to estimate the sum via a corresponding integral.
We estimate the integral as follows.
\begin{align*}
	\int_{n/u^2}^n \frac{dx}{x(\ln(n) - \ln(x) + 1)} &= - \int_{n/u^2}^n \frac{d(\ln(n) - \ln(x) + 1)}{(\ln(n) - \ln(x) + 1)} \\
	&= \ln((\ln(n) - \ln(x) + 1))\bigg\rvert_n^{n/u^2} \\
	&= \ln(2\ln(u) + 1) 
\end{align*}
Therefore,
\begin{align*}
	E[T_I] &\le \frac{1}{\gamma(\beta)} \left(\frac{n}{\ln(u)}\left(\ln\left(\frac{n}{u^2}\right) + 1\right) + n (\ln(2\ln(u) + 1) + 1) \right) \\
	&= O\left(\frac{n}{\log(u)}\log\left(\frac{n}{u^2}\right) + n\log\log(u)\right)
\end{align*}
Note that the first term is decreasing in $u$, while the second one is increasing. We show that they are asymptotically the same when $u = n^{1/\ln\ln(n)}$.
\begin{align*}
	\frac{n}{\ln(n^{1/\ln\ln(n)})}\ln\left(\frac{n}{n^{2/\ln\ln(n)}}\right) &= \frac{n \ln\ln(n)}{\ln(n)} \cdot \left(\ln(n) - \frac{2\ln(n)}{\ln\ln(n)}\right) \\
	&= \Theta(n \ln\ln(n)),
\end{align*}
\begin{align*}
	n\ln\ln(n^{1/\ln\ln(n)}) &= n\ln\frac{\ln(n)}{\ln\ln(n)} = n\ln\ln(n) - n\ln\ln\ln(n) \\
	&= \Theta(n \ln\ln(n)).
\end{align*}
Therefore, when $u \le n^{1/\ln\ln(n)}$, the first term is dominant, otherwise the second term is dominant. Hence, we conclude
\begin{align*}
	E[T_I] = \begin{cases}
		O\left(\frac{n}{\log(u)}\log\left(\frac{n}{u^2}\right)\right), &\text{ if } u < n^{1/\ln\ln(n)}, \\
		O(n\log\log(u)), &\text{ if } u \ge n^{1/\ln\ln(n)}.
	\end{cases}
\end{align*}

\textbf{Case 5: $\beta > 3$.}

In this case we have
\begin{align*}
	E[T_I] &\le \sum_{d = 1}^{n} \frac{1}{p_d} \le \frac{1}{\gamma(\beta)} \sum_{d = 1}^n \frac{n}{d} \le \frac{n(\ln(n) + 1)}{\gamma(\beta)} = O(n\log(n))\\
\end{align*}

We complete the computation of the right column of Table~\ref{tbl:runtimes} by using Wald's equation (Lemma~\ref{lem:wald}) and estimates of the expected cost of each iteration shown in Lemma~\ref{lem:exp-lambda}.

\textbf{Case 1: $\beta < 1$.}

If $u \ge \sqrt{\ln(n)}$, then
\begin{align*}
	E[T_F] = O(n) \cdot \Theta(u) = O(nu).
\end{align*}

If $u < \sqrt{\ln(n)}$, then
\begin{align*}
	E[T_F] = O\left(\frac{n}{u^2}\log\frac{n}{u^2}\right) \cdot \Theta(u) = O\left(\frac{n}{u}\log\frac{n}{u^2}\right).
\end{align*}

\textbf{Case 2: $\beta = 1$.}

If $u \ge \sqrt{\ln(n)\ln\ln(n)}$, then
\begin{align*}
	E[T_F] = O(n) \cdot \Theta\left(\frac{u}{\log(u)}\right) = O\left(\frac{nu}{\log(u)}\right).
\end{align*}

If $u < \sqrt{\ln(n)\ln\ln(n)}$, then
\begin{align*}
	E[T_F] = O\left(\frac{n\log(u)}{u^2}\log\frac{n}{u^2}\right) \cdot \Theta\left(\frac{u}{\log(u)}\right) = O\left(\frac{n}{u}\log\frac{n}{u^2}\right).
\end{align*}

\textbf{Case 3: $\beta \in (1, 2)$.}

If $u \ge (\ln(n))^{1/(3 - \beta)}$, then
\begin{align*}
	E[T_F] = O(n) \cdot \Theta(u^{2 - \beta}) = O(nu^{2 - \beta}).
\end{align*}

If $u < (\ln(n))^{1/(3 - \beta)}$, then
\begin{align*}
	E[T_F] = O\left(\frac{n}{u^{3 - \beta}}\log\frac{n}{u^2}\right) \cdot \Theta(u^{2 - \beta}) = O\left(\frac{n}{u}\log\frac{n}{u^2}\right).
\end{align*}

\textbf{Case 4: $\beta = 2$.}

If $u \ge \ln(n)$, then
\begin{align*}
	E[T_F] = O(n) \cdot \Theta(\log(u)) = O(n\log(u)).
\end{align*}

If $u < \ln(n)$, then
\begin{align*}
	E[T_F] = O\left(\frac{n}{u}\log\frac{n}{u^2}\right) \cdot \Theta(\log(u)) = O\left(\frac{n\log(u)}{u}\log\frac{n}{u^2}\right).
\end{align*}

\textbf{Case 5: $\beta \in (2, 3)$.}

If $u \ge (\ln(n))^{1/(3 - \beta)}$, then
\begin{align*}
	E[T_F] = O(n) \cdot \Theta(1) = O(n).
\end{align*}

If $u < (\ln(n))^{1/(3 - \beta)}$, then
\begin{align*}
	E[T_F] = O\left(\frac{n}{u^{3 - \beta}}\log\frac{n}{u^2}\right) \cdot \Theta(1) = O\left(\frac{n}{u^{3 - \beta}}\log\frac{n}{u^2}\right).
\end{align*}

\textbf{Case 6: $\beta = 3$.}

If $u \ge n^{1/\ln\ln(n)}$, then
\begin{align*}
	E[T_F] = O(n\log\log(u)) \cdot \Theta(1) = O(n\log\log(u)).
\end{align*}

If $u < n^{1/\ln\ln(n)}$, then
\begin{align*}
	E[T_F] = O\left(\frac{n}{\log(u)}\log\frac{n}{u^2}\right) \cdot \Theta(1) = O\left(\frac{n}{\log(u)}\log\frac{n}{u^2}\right).
\end{align*}

\textbf{Case 7: $\beta > 3$.}

For all $u$ we have
\begin{align*}
	E[T_F] = O(n\log(n)) \cdot \Theta(1) = O(n\log(n)).
\end{align*}

}

\end{document}